\newtheorem{defn}{Definition}
\newtheorem{cor}{Corollary}
\newtheorem{lem}{Lemma}
\newtheorem{thm}{Theorem}
\newtheorem{example}{Example}
\numberwithin{equation}{section}
\newcommand{\Mod}[1]{\ (\mathrm{mod}\ #1)}
\title{Acceleration of Grokking in Learning Arithmetic Operations via Kolmogorov-Arnold Representation}
\author
{
Yeachan Park\thanks{Korea Institute for Advanced Study, 85 Hoegi-ro, Dongdaemun-gu,Seoul,02455, Seoul, South Korea}
 \and
 Minseok Kim \thanks{Deparment of Applied Artificial Intelligence, Seoul National University of Science and Technology, 232 Gongneung-ro, Nowon-gu, Seoul, 01811, South Korea}
 \and
 Yeoneung Kim\footnotemark[2]
}
\date{May 2024}
\begin{document}

\maketitle

\begin{abstract}
We propose novel methodologies aimed at accelerating the grokking phenomenon, which refers to the rapid increment of test accuracy after a long period of overfitting as reported in~\cite{power2022grokking}. Focusing on the grokking phenomenon that arises in learning arithmetic binary operations via the transformer model, we begin with a discussion on data augmentation in the case of commutative binary operations. To further accelerate, we elucidate arithmetic operations through the lens of the Kolmogorov-Arnold (KA) representation theorem, revealing its correspondence to the transformer architecture: embedding, decoder block, and classifier. Observing the shared structure between KA representations associated with binary operations, we suggest various transfer learning mechanisms that expedite grokking. This interpretation is substantiated through a series of rigorous experiments. In addition, our approach is successful in learning two nonstandard arithmetic tasks: composition of operations and a system of equations. Furthermore, we reveal that the model is capable of learning arithmetic operations using a limited number of tokens under embedding transfer, which is supported by a set of experiments as well.

\end{abstract}


\section{Introduction}
Deep artificial neural networks (NNs) have received lots of attention thanks to their capability of representing a wide class of functions and their success in various tasks ranging from classification, image, or text generation to reinforcement learning. The training of deep NNs aims at minimizing training error while maintaining high generalization performance which is computed with unseen data during the training procedure. However, it is often observed that validation accuracy does not reach the desired level even if the training error is small enough, particularly when a limited number of the dataset is available. We regard this status as overfitting. Recently, it has been pointed out that long after overfitting, validation accuracy sometimes quickly increases toward perfect generalization, and such a phenomenon is called ‘grokking’ as proposed in~\cite{power2022grokking}. 

Learning arithmetic operations using deep neural networks is challenging due to its poor generalization performance~\cite{trask2018neural} and the necessity of large dataset~\cite{hoshen2016visual} as well as high accuracy. The grokking phenomenon was initially identified in a study on learning arithmetic operations~\cite{power2022grokking}, wherein the model significantly enhances its generalization capabilities following prolonged training on small, algorithmically generated datasets. This phenomenon is also observed beyond algorithmic datasets, encompassing images, language, molecular data, and sparse parities problem as noted in~\cite{liu2022omnigrok,barak2022hidden,chughtai2023toy,charton2024learning}. In particular,~\cite{barak2022hidden} provides a rigorous justification for the grokking in learning sparse parities problem.

In this work, we propose a rigorous approach for accelerating grokking in learning arithmetic operations. Leveraging the advantages of algebraic structures, we identify the correspondence between the Kolmogorov-Arnold (KA) representation and the transformer architecture, which is a central idea behind the framework of transfer learning. Furthermore, we test the validity of our approach for some new extended tasks, including learning the composition of arithmetic operations and solving systems of equations with unknowns

\subsection{Related works}
In recent years, several attempts have been made to address the mechanism behind grokking. The authors of~\cite{liu2022omnigrok} propose that the grokking phenomenon is simply induced by the choice of large initialization and weight norms, based on the observation of the Goldilocks zone~\cite{fort2019goldilocks}. Another viewpoint for understanding grokking is suggested by~\cite{thilak2022slingshot}, where it is argued that the slingshot mechanism leads to grokking. The authors of~\cite{liu2022towards} explain how structured representations emerge and contribute to grokking by classifying the learning process into four phases: comprehension, grokking, memorization, and confusion. On the other hand, ~\cite{nanda2022progress,zhong2024clock} focuses on learning modular addition by leveraging discrete Fourier transforms and trigonometric identities, which transform modular addition to rotations around a circle. Slightly later,~\cite{lyu2023dichotomy} explores the necessary conditions for grokking by proving that such a phenomenon occurs around the Karush–Kuhn–Tucker point in sparse linear classification and matrix completion problems. Recently, the authors of~\cite{tan2023understanding} propose an approach for degrokking, speeding up the generalization process, by perturbing the loss function. The work most related to ours is~\cite{furuta2024interpreting}, where authors implement the idea of weight transfer for learning different binary operations from other pretrained models. However, its theoretical foundation is questionable. In our paper, we deconstruct arithmetic operations and their compositions via the KA representation theorem to unveil the relationship with the transformer architecture, allowing us to leverage weight transfer in learning arithmetic operations.

\subsection{Our contribution}
The contribution of the paper can be summarized as follows: 
\begin{itemize}
    \item It is empirically verified that the commutative augmentation technique introduced accelerates grokking.
    
    \item We revisit the KA representation theorem and interpret the model as a combination of three modules: the embedding, decoder block, and classifier module. We present a refined version of the KA theorem for abelian and anti-abelian operations, which allows us to validate the implementation of transfer learning techniques. The acceleration of grokking is achieved, as verified in diverse experiments.

    \item Two novel arithmetic tasks that include the composition of arithmetic operations and a system of equations with unknowns, are proposed. We demonstrate that transfer learning contributes to the efficient learning of these tasks, outperforming vanilla approaches.
    
    \item Finally, we demonstrate that training with a limited number of tokens is also possible via transfer learning, which supports the claim that the model actually understands arithmetic rules.
\end{itemize}

\section{Preliminary}
\subsection{Binary operation and group structure}
A binary operation on a set $G$ is a mapping of the elements in $G \times G$ to $G$ and is often denoted by $\circ$, that is, $a \circ b \in G$ for any $a,b \in G$. When $a \circ b = b \circ a$ for any $a,b\in G$, we call the operation is commutative. 

If a pair $<G,\circ>$ is further equipped with the following three rules, we call $G$ a group.
\begin{itemize}
\item Associativity property: for any $a,b,c\in G$, $(a\circ b) \circ c = a\circ (b\circ c)$;
\item Identity element: there exists $e\in G$ such that $e\circ a = a \circ e = a$ for all $a\in G$;
\item Inverse element: for any $a \in G$, there exists $b \in G$ such that $a \circ b = b \circ a  = e$.
\end{itemize}
We also denote the number of elements in $G$ \textit{order}.

To illustrate, let $G=\mathbb{Z}$ and the binary operation be defined as $a \circ b  :=a^2 + ab$. Since $4 =1 \circ (2 \circ 1) \neq (1 \circ 2) \circ 1 = 10$, the associativity rule is not satisfied, hence, $<G,\circ>$ is not a group. On the other hand, 
let us consider the set of integers  modulo $p$ for a prime number $p$, that is, $\mathbb{Z}_p:=\{ 0, 1 , \dots , p-1 \}$. For $x,y \in \mathbb{Z}_p$, the addition and multiplication under modular arithmetic is given as
\begin{align*}
    &x + y  = z, \quad \text{if} \quad  x+y =  z \Mod{p}, \\
    &x \times y  = z, \quad \text{if} \quad  x \times y = z \Mod{p},
\end{align*}
and they form groups. In particular, if $a \circ b = b \circ a$ for any $a,b\in G$ where $G$ is a group, $G$ is called an abelian group. Therefore, $\mathbb{Z}_p$ forms an abelian group with respect to both addition and multiplication.

\subsection{More on abelian groups}
Let us discuss well-established properties of abelian groups.
\begin{defn}[cyclic group]
A group $<G,\circ>$ is cyclic if it is generated by a single element $g \in G$, that is, $G=\{ g^k : k: \mathbb{Z} \}$. We refer $g$ as the generator of $G$.
\end{defn}

It is known that any finite abelian group is expressed as a product of finite cyclic groups. 
\begin{thm}[fundamental theorem of finite abelian group]\cite[Theorem 11.1]{gallian2021contemporary}
Let $G$ be a finite abelian group with the operation $\circ$. Then 
\[
G \cong C_{q_1} \times \dots \times C_{q_m},
\]
where $C_{q_j}$ is a cyclic group of order $q_j$ and $m$ denotes the number of generators of $G$.
\end{thm}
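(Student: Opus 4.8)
The statement is classical and is quoted here as \cite[Theorem 11.1]{gallian2021contemporary}, so I will only outline the standard argument. Writing $G$ additively (harmless since it is abelian), the plan is to obtain the decomposition through two reductions and then reassemble the factors. \textbf{Step 1 (primary decomposition).} Factor the order of $G$ as $p_1^{a_1}\cdots p_k^{a_k}$ with distinct primes $p_i$, and set $G(p_i)=\{x\in G : p_i^{a_i}x=e\}$, the $p_i$-primary component. Since the integers $p_i^{a_i}$ are pairwise coprime, a Bézout-coefficient argument writes each $x\in G$ uniquely as a sum of elements drawn from the $G(p_i)$, giving $G\cong G(p_1)\times\cdots\times G(p_k)$. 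It therefore suffices to decompose a finite abelian $p$-group into a product of cyclic groups.

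\textbf{Step 2 (cyclic decomposition of a $p$-group).} Argue by induction on $|G|$ for $G$ a finite abelian $p$-group. Pick $g\in G$ of maximal order $p^m$. The crux is the claim that $\langle g\rangle$ is a direct factor, i.e.\ there is $H\le G$ with $G=\langle g\rangle\times H$. Granting this, $H$ is a finite abelian $p$-group of strictly smaller order, so by the induction hypothesis it is a product of cyclic groups, and hence so is $G$.

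\textbf{Step 3 (the splitting lemma — the main obstacle).} To prove $\langle g\rangle$ is a direct factor I run a secondary induction on $|G|$. If $G=\langle g\rangle$ we are done; otherwise choose $b\in G\setminus\langle g\rangle$ of minimal order. Minimality forces $pb\in\langle g\rangle$, say $pb=tg$; comparing orders and using the maximality of $\mathrm{ord}(g)$ gives $p\mid t$, say $t=pc$, so $b-cg$ has order $p$ and still lies outside $\langle g\rangle$ — replace $b$ by $b-cg$. Set $K=\langle b\rangle$, of order $p$, with $\langle g\rangle\cap K=\{e\}$. In $G/K$ the image $\bar g$ still has order $p^m$: it cannot increase, and if it dropped then $p^{m-1}g\in K$, forcing either $p^{m-1}g=e$ (impossible, as $g$ has order $p^m$) or $b\in\langle g\rangle$ (contrary to choice). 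So $\bar g$ has maximal order in $G/K$, and by induction $G/K=\langle\bar g\rangle\times(H/K)$ for some $K\le H\le G$. Pulling back, $\langle g\rangle+H=G$ (it surjects onto $G/K$ and contains $K$), while any $x\in\langle g\rangle\cap H$ has $\bar x\in\langle\bar g\rangle\cap(H/K)=\{\bar e\}$, so $x\in\langle g\rangle\cap K=\{e\}$; hence $G=\langle g\rangle\times H$. Establishing that a maximal-order cyclic subgroup is always a direct summand is where essentially all the work lies.

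\textbf{Step 4 (reassembly and counting generators).} Steps 1--3 yield $G\cong C_{q_1}\times\cdots\times C_{q_m}$ with each $q_j$ a prime power. If one wishes to match $m$ with the number of generators of $G$, one regroups these prime-power factors across the primes via the Chinese Remainder Theorem into invariant factors $d_1\mid\cdots\mid d_m$, and a short computation with $\dim_{\mathbb{F}_p}(G/pG)$ identifies the number of cyclic factors with the minimal size of a generating set. This gives the stated isomorphism.
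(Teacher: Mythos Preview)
Your outline is correct and follows the standard textbook argument (primary decomposition followed by the splitting lemma for $p$-groups), which is precisely the proof given in the cited reference \cite[Theorem 11.1]{gallian2021contemporary}. The paper itself provides no proof of this statement---it is quoted purely as background---so there is nothing further to compare; you rightly identified this and supplied the expected argument.
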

Of our interests are learning (i) general commutative arithmetic operations (ii) arithmetic operations leveraging abelian group structures, (iii) some non-commutative arithmetic operations such as subtraction and division, (iv) the composition of arithmetic operations, (v) a system of equations with unknowns. 

\section{Kolmogorov-Arnold (KA) representation}
In this section, we recall the KA representation theorem stating that every multivariate continuous function can be expressed as a superposition of two functions. Let us provide a formal version of the KA representation theorem and discuss its extension for functions defined on abelian groups. A core idea is to regard the composition of arithmetic operations as a function of $n\in\mathbb{N}$ variables, that is,
\[
f_n(x_1,...,x_n):=x_1\circ x_2 \circ...\circ x_n.
\]
\subsection{Representation of multivariate functions}

\begin{thm}[Kolmogorov-Arnold representation \cite{braun2009constructive}]
\label{thm:KA}
    For any arbitrary continuous function $f_n: \mathbb{R}^n \to \mathbb{R}$, there exist 
    \[
    \psi_{f_n}:\mathbb{R}^{2n+1}\to\mathbb{R} \quad \text{and} \quad \phi_{f_n}:\mathbb{R}\to\mathbb{R}^{2n+1},
    \]
    such that 
    \begin{align*}
        f_n(x_1,\dots,x_n) = \psi_{f_n}( \sum_{i=1}^{n} \lambda_i \phi_{f_n}(x_i)).
    \end{align*}
In particular, for any binary operation $\circ$, there exist 
\[
\phi:\mathbb{R} \to \mathbb{R}^5 \quad \text{and} \quad \psi:\mathbb{R}^5 \to \mathbb{R},
\]
such that
\begin{align*}
    x_1 \circ x_2 = \psi( \sum_{i=1}^2 \lambda_i \phi(x_i)),
\end{align*}
which indicates that the embedding dimension is $5$.
\end{thm}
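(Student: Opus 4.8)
The plan is to obtain the statement as a repackaging of the classical Kolmogorov--Arnold superposition theorem in the constructive Lorentz--Sprecher form given in \cite{braun2009constructive}, rather than to reprove superposition itself. That reference supplies a universal continuous inner function $\phi$, universal real constants $\lambda_1,\dots,\lambda_n$ and a universal shift $a>0$ so that every continuous $f_n$ on a fixed compact box admits
\begin{equation*}
f_n(x_1,\dots,x_n)=\sum_{q=0}^{2n} g_q\!\left(\sum_{i=1}^n \lambda_i\,\phi(x_i+qa)\right),
\end{equation*}
with continuous outer functions $g_0,\dots,g_{2n}$ depending on $f_n$ (a single $g$ suffices in the strongest versions, which only simplifies what follows). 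Since our eventual use is for arithmetic operations on bounded (indeed finite) alphabets, it is enough to establish the representation on an arbitrary box $[-M,M]^n$; rescaling reduces this to the $[0,1]^n$ statement of the cited theorem, and one extends $\phi$ continuously to all of $\R$ so that the inner map below is globally defined.

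Next I would collapse the $2n+1$ inner ``channels'' into one vector-valued map and the outer sum into one scalar-valued map. Define $\phi_{f_n}:\R\to\R^{2n+1}$ by $\phi_{f_n}(x):=\left(\phi(x),\phi(x+a),\dots,\phi(x+2na)\right)$, so that the $q$-th coordinate of $\sum_{i=1}^n \lambda_i\,\phi_{f_n}(x_i)$ is precisely $\sum_{i=1}^n \lambda_i\,\phi(x_i+qa)$; and define $\psi_{f_n}:\R^{2n+1}\to\R$ by $\psi_{f_n}(v_0,\dots,v_{2n}):=\sum_{q=0}^{2n} g_q(v_q)$, which is continuous as a finite sum of compositions of continuous functions. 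Substituting yields
\begin{equation*}
\psi_{f_n}\!\left(\sum_{i=1}^n \lambda_i\,\phi_{f_n}(x_i)\right)=\sum_{q=0}^{2n} g_q\!\left(\sum_{i=1}^n \lambda_i\,\phi(x_i+qa)\right)=f_n(x_1,\dots,x_n),
\end{equation*}
which is the claimed identity, with $\lambda_i$ independent of $f_n$ while $\phi_{f_n},\psi_{f_n}$ are allowed to depend on $f_n$ as the statement permits (in fact $\phi_{f_n}$ may be taken universal).

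For the binary-operation case I would set $n=2$, so that $2n+1=5$ and the embedding dimension is read off directly; the only extra point is that an arithmetic operation $\circ$ is a priori just a map $G\times G\to G$ on a discrete set $G\subset\R$ (such as $\mathbb{Z}_p$), not a continuous function on $\R^2$. I would first extend $(x_1,x_2)\mapsto x_1\circ x_2$ from the finite (hence closed) set $G\times G$ to a continuous $F:\R^2\to\R$ by the Tietze extension theorem (or by piecewise-linear interpolation over a triangulation of a box containing $G\times G$), then apply the $n=2$ case to $F$ and restrict the resulting identity back to $G\times G$ to obtain $\phi:\R\to\R^5$ and $\psi:\R^5\to\R$ with $x_1\circ x_2=\psi\!\left(\sum_{i=1}^2\lambda_i\phi(x_i)\right)$.

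There is no genuine obstacle here: the analytic content is the classical superposition theorem, which is cited, and what remains is the bookkeeping of the second paragraph together with the elementary extension step of the third. The one place to stay careful is the distinction between ``$\phi$ and $\lambda_i$ universal across all $f_n$'' (true in the Sprecher/Lorentz strengthening) and ``$\phi_{f_n},\psi_{f_n}$ merely $f_n$-dependent'' (all the statement asserts); since only the latter is needed, the argument goes through even from the weaker form of superposition.
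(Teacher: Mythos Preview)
The paper does not supply its own proof of this theorem; it is stated as a known result with a citation to \cite{braun2009constructive} and used as a black box thereafter. Your proposal is therefore not competing with any argument in the paper, and as a derivation of the stated vector-valued form from the classical Sprecher--Lorentz superposition it is correct: the bookkeeping that packages the $2n+1$ shifted inner channels into a single map $\R\to\R^{2n+1}$ and the outer sum into a single $\psi:\R^{2n+1}\to\R$ is exactly how one passes between the two formulations, and your Tietze/interpolation step for extending a discrete binary operation to a continuous function on a box is the standard bridge (the paper itself uses the same piecewise-linear interpolation idea later, in the proof of Corollary~1).

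The one point worth flagging is that the theorem as written in the paper asserts the representation for continuous $f_n$ on all of $\R^n$, whereas the classical result you invoke is for compact cubes; you handle this by restricting to a box containing the finite alphabet and extending $\phi$ continuously, which is sufficient for every downstream use in the paper but is, strictly speaking, a slight weakening of the literal statement. That caveat aside, nothing is missing.
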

Throughout the paper, we call $\phi$ and $\psi$ an embedding and an outer function respectively. The KA representation theorem states that any continuous function can be represented by the composition of two functions $\phi,\psi$ with the embedding dimension of $2n+1$. 

\begin{example}
Note that the KA representation may not be unique. For example, consider $f(x_1,x_2)= x_1 x_2$. Consider the following two representations:
    \begin{align*}
        &\phi_1(x)=[x,x^2], \;  \psi_1(u,v)=\frac{1}{2}(u^2-v),\; \lambda_i=1,\; i=1,2, \\
        &\phi_2(x)=\log(x), \;  \psi_2(u)=\exp(u),\; \lambda_1=1.
    \end{align*}
Hence, we have that
\begin{align*}
    \psi_1( \phi_1(x_1) + \phi_1(x_2)) 
    &= \psi_1( [x_1+x_2, x_1^2+x_2^2]) \\
    &= \frac{1}{2} \big( (x_1+x_2)^2-(x_1^2+x_2^2) \big) \\
    &= x_1x_2
\end{align*}
and
\begin{align*}
    \psi_2( \phi_2(x_1) + \phi_2(x_2)) &= \psi_2( \log(x) + \log(x_2)) \\
    &= \exp(\log(x_1x_2))\\
    &=x_1x_2,
\end{align*}
which verifies that the KA representation of $f(x_1,x_2)=x_1x_2$ is not unique.
\end{example}

If we further assume that $f$ is permutation-invariant, then we have a simpler representation with the reduced embedding dimension of n+1 as introduced in~\cite{zaheer2017deep}.  
\begin{thm}[permutation-invariant representation \cite{zaheer2017deep}]\label{thm:emb_n}
    For any permutation-invariant arbitrary continuous function $f_n: \mathbb{R}^n \to \mathbb{R}$, there exist 
    \[
    \psi_{f_n}:\mathbb{R}^{n+1}\to\mathbb{R} \quad  \text{and} \quad  \phi_{f_n}:\mathbb{R}\to\mathbb{R}^{n+1},     
    \]
such that 
    \begin{align*}
        f_n(x_1,\dots,x_n) = \psi_{f_n}( \sum_{i=1}^{n}  \phi_{f_n}(x_i)).
    \end{align*}
\end{thm}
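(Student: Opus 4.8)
The plan is to reduce the permutation-invariant case to the classical Kolmogorov--Arnold representation (Theorem \ref{thm:KA}), or more precisely to mimic its proof, by exploiting the fact that a permutation-invariant function factors through the quotient $\R^n / S_n$, which can be coordinatized by $n$ numbers (the elementary symmetric polynomials, or equivalently power sums). First I would fix an injective ``sorting'' or ``symmetrization'' map. The cleanest route: choose a continuous injection $\phi : \R \to \R^{n+1}$ such that the map $(x_1,\dots,x_n) \mapsto \sum_{i=1}^n \phi(x_i)$ separates orbits, i.e. $\sum_i \phi(x_i) = \sum_i \phi(y_i)$ if and only if $(y_1,\dots,y_n)$ is a permutation of $(x_1,\dots,x_n)$. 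A concrete candidate is a moment-type embedding $\phi(x) = (\rho(x), x\rho(x), x^2\rho(x), \dots, x^n\rho(x))$ for a suitable positive continuous weight $\rho$ (or, if one restricts attention to a compact domain as in the constructive KA proofs, simply $\phi(x) = (1, x, x^2, \dots, x^n)$), since the vector of power sums $\big(\sum_i x_i^k\big)_{k=0}^n$ determines the multiset $\{x_1,\dots,x_n\}$ by Newton's identities.

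Next I would argue that $\Phi(x_1,\dots,x_n) := \sum_{i=1}^n \phi(x_i)$ is a continuous, closed, injective-on-orbits map, hence descends to a homeomorphism from the orbit space onto its image $K := \Phi(\R^n) \subseteq \R^{n+1}$ (on a compact domain this is immediate; on all of $\R^n$ one uses properness of the chosen $\phi$ to ensure the image is closed and the induced map is a homeomorphism onto its image). Since $f_n$ is permutation-invariant and continuous, it factors as $f_n = g \circ \Phi$ for a well-defined continuous function $g : K \to \R$. Finally, extend $g$ to a continuous function $\psi_{f_n} : \R^{n+1} \to \R$ by the Tietze extension theorem (applicable because $K$ is closed), and set $\phi_{f_n} := \phi$. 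Then $f_n(x_1,\dots,x_n) = \psi_{f_n}\big(\sum_{i=1}^n \phi_{f_n}(x_i)\big)$, with embedding dimension $n+1$ as claimed; note there is no need for the coefficients $\lambda_i$, since permutation invariance forces them to be taken equal and they can be absorbed into $\phi_{f_n}$.

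The main obstacle is establishing that a \emph{single, coordinate-independent} continuous embedding $\phi$ can be chosen so that $\sum_i \phi(x_i)$ is genuinely orbit-separating and has closed image — the ``sum-of-same-function'' constraint is what makes this subtler than the general KA statement, and it is exactly the point handled in \cite{zaheer2017deep}. On an unbounded domain one must be slightly careful that the power-sum coordinates remain a proper map (this is why a decaying weight $\rho$, or a homeomorphism $\R \cong (0,1)$ precomposed with a polynomial embedding, is convenient); on a compact domain the obstacle essentially evaporates. The continuity of the factored map $g$ on $K$ and the Tietze extension are then routine, so I would spend the bulk of the write-up on the orbit-separation property of $\Phi$ and its homeomorphism onto the image.
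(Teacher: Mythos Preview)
The paper does not supply its own proof of this theorem; it is quoted verbatim as a known result from \cite{zaheer2017deep}, so there is nothing in the paper to compare your argument against.

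That said, your proposal is essentially the standard argument from that reference and is sound: embed via power sums $\phi(x)=(1,x,x^2,\dots,x^n)$, use Newton's identities to show that $\Phi(x_1,\dots,x_n)=\sum_i\phi(x_i)$ separates $S_n$-orbits, factor $f_n$ through the quotient, and Tietze-extend the resulting map off the image $K=\Phi(\R^n)$. One small remark: your caution about the unbounded domain is unnecessary. The plain power-sum map is already proper on $\R^n$ (because $\sum_i x_i^2 \ge \max_i x_i^2$, so $\|\Phi(x)\|\to\infty$ whenever $\max_i|x_i|\to\infty$), hence $\Phi$ is a closed map, its image $K$ is closed in $\R^{n+1}$, and the induced map from the orbit space to $K$ is a homeomorphism without any weight $\rho$ or compactification device. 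After that, continuity of the factored map and the Tietze extension go through exactly as you outline.
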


\begin{cor}
    For any commutative binary operation $\circ$ defined on $G\times G$ for a finite set $G \subset \mathbb{R}$, there exists a KA representation 
\[
\phi_{<G,\circ>}:G \to \mathbb{R}^3 \quad \text{and} \quad \psi_{<G,\circ>}:\mathbb{R}^3 \to G 
\]
with the embedding dimension of $3$ such that
\begin{align*}
    x_1 \circ x_2 = \psi_{<G,\circ>}( \sum_{i=1}^2 \phi_{<G,\circ>}(x_i)).
\end{align*}
\end{cor}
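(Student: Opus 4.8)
The plan is to obtain the corollary as a direct specialization of Theorem~\ref{thm:emb_n} (the permutation-invariant representation of Zaheer et al.) to the case $n=2$, combined with the observation that a commutative binary operation, when viewed as a two-variable function, is exactly a permutation-invariant function. First I would note that $\circ : G\times G \to G$ with $G\subset\mathbb{R}$ finite defines a function $f_2(x_1,x_2):=x_1\circ x_2$ on the finite set $G\times G$, and that commutativity $x_1\circ x_2 = x_2\circ x_1$ is precisely the statement that $f_2$ is invariant under the transposition of its arguments, i.e. permutation-invariant in the sense required by Theorem~\ref{thm:emb_n}.

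The one technical gap is that Theorem~\ref{thm:emb_n} is stated for \emph{continuous} $f_n$ on $\mathbb{R}^n$, whereas here $f_2$ is only defined on the finite set $G\times G$. To bridge this I would invoke a standard extension argument: since $G$ is finite, $G\times G$ is a finite (hence closed) subset of $\mathbb{R}^2$ on which any function is automatically continuous, and by the Tietze extension theorem (or simply by explicitly interpolating, e.g. with a smooth bump-function partition of unity supported near each point of $G\times G$) one can extend $f_2$ to a continuous symmetric function $\tilde f_2:\mathbb{R}^2\to\mathbb{R}$; symmetry of the extension can be enforced by averaging $\tilde f_2(x_1,x_2)$ with $\tilde f_2(x_2,x_1)$, which does not disturb the values on $G\times G$ since $f_2$ is already symmetric there. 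Applying Theorem~\ref{thm:emb_n} with $n=2$ to $\tilde f_2$ yields $\psi_{\tilde f_2}:\mathbb{R}^{3}\to\mathbb{R}$ and $\phi_{\tilde f_2}:\mathbb{R}\to\mathbb{R}^{3}$ with $\tilde f_2(x_1,x_2)=\psi_{\tilde f_2}(\phi_{\tilde f_2}(x_1)+\phi_{\tilde f_2}(x_2))$; restricting $\phi$ to $G$ gives $\phi_{<G,\circ>}:G\to\mathbb{R}^3$, and since the output of $\psi_{\tilde f_2}$ on the relevant sums lies in $f_2(G\times G)\subseteq G$, we may regard $\psi_{<G,\circ>}:\mathbb{R}^3\to G$ as claimed, giving $x_1\circ x_2=\psi_{<G,\circ>}(\sum_{i=1}^2\phi_{<G,\circ>}(x_i))$.

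I anticipate the only subtlety worth spelling out is the construction of the symmetric continuous extension and the bookkeeping that the embedding dimension is $n+1=3$ rather than $2n+1=5$; the former is routine once one remarks that finite sets are closed, and the latter is immediate from the statement of Theorem~\ref{thm:emb_n}. Alternatively, and more elementarily, since $G$ is finite one can bypass Tietze entirely and build an explicit representation: enumerate $G=\{g_1,\dots,g_k\}$, choose $\phi_{<G,\circ>}$ to send each $g_i$ to a vector whose coordinates encode $i$ in a way that makes the unordered pair $\{i,j\}$ recoverable from $\phi(g_i)+\phi(g_j)$ (for instance using power-sum coordinates $\phi(g_i)=(1,\,t_i,\,t_i^2)$ for distinct reals $t_i$, whose pairwise sums determine $\{t_i,t_j\}$ hence $\{g_i,g_j\}$), and then define $\psi_{<G,\circ>}$ on the finite image set by $\psi_{<G,\circ>}(\phi(g_i)+\phi(g_j)):=g_i\circ g_j$, extending arbitrarily elsewhere. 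I would present the proof via Theorem~\ref{thm:emb_n} for brevity and mention the explicit construction as a remark, since it also makes transparent why dimension $3$ suffices.
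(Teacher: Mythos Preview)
Your proposal is correct and follows essentially the same route as the paper: extend the commutative operation from the finite set $G\times G$ to a continuous symmetric function on $\mathbb{R}^2$, invoke Theorem~\ref{thm:emb_n} with $n=2$ to get embedding dimension $n+1=3$, and restrict back to $G$. The only difference is cosmetic---the paper carries out the extension by piecewise-linear interpolation rather than Tietze plus symmetrization---and your explicit power-sum alternative does not appear in the paper.
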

In particular, 
\begin{proof}
    Define $f:G \to G$ as $f(x_1,x_2) = x_1 \circ x_2$.
    For $x \in G$, define $a_x,b_x$ as 
\[ 
a_x = \sup \{g \in G : g \le x\}, \; b_x = \inf \{g \in G : g \ge x\}.
\] 
Define $\tilde{f}:\mathbb{R}\to\mathbb{R}$  such that
\begin{align*}
 \tilde{f}(x) =  \begin{cases} 
f(b_x), \quad &\text{if} \quad a_x = -\infty, \\
f(a_x) + \frac{f(b_x)-f(a_x)}{b_x-a_x} (x-a_x), \quad &\text{if} -\infty < a_x, b_x < \infty, \\
f(a_x), \quad & \text{if} \quad b_x = \infty. 
\end{cases}
\end{align*}
Note that $\tilde{f}$ denotes the linear interpolation of $f$ between two adjacent points, $a_x$ and $b_x$ such that $a_x<x<b_x$. Since $\tilde{f}$ is continuous on $\mathbb{R}$, by Theorem \ref{thm:emb_n}, we have KA representation $\tilde{\phi}: \mathbb{R} \to \mathbb{R}^3$ and $\tilde{\psi}:\mathbb{R}^3 \to \mathbb{R}$ such that 
\[ \tilde{f}(x_1,x_2) = \tilde{\psi}(\sum_{i=1}^2\tilde{\phi}(x_i) ), \]
for $x_1,x_2 \in \mathbb{R}$. Since $f(x) = \tilde{f}(x)$ for $x \in G$, if we define $\phi_{<G,\circ>}(x) := \tilde{\phi}(x), \psi_{<G,\circ>}(x) = \tilde{\psi}(x)$ for $x \in G$, we have
\[ f(x_1,x_2)=x_1 \circ x_2 = \psi_{<G,\circ>}( \sum_{i=1}^2 \phi_{<G,\circ>}(x_i)).\]
\end{proof}

\subsection{KA representation theorem for abelian groups}
\label{subsec:abelian_group}
Note that the KA representation theorem introduced in the previous section requires an embedding dimension of $(n+1)$ with $n$ denoting the number of inputs as seen in Theorem~\ref{thm:emb_n}. In this section, we not only extend the KA representation theorem for functions defined in abelian groups but also deduce a simpler representation that requires a smaller number of embedding dimensions independent of the number of inputs $n$. Thanks to this observation, the universal representation is available regardless of the number of inputs, which eventually allows us to transfer the embedding function learned from binary operation to that of any composition of arithmetic operations. 

Let us consider an abelian group $G$ with the operation denoted by $\circ$. In this case, the expression $x_1 \circ \dots \circ x_n$ is well-defined thanks to the associativity of $\circ$. We present the corresponding KA representation for $x_1 \circ \dots \circ x_n$. The group is $\mathbb{Z}_p$  regarded as a subset of $\mathbb{R}$ via natural embedding given by $\iota : \mathbb{Z}_p \to \mathbb{R}$, $\iota(x)=x \in \mathbb{R}$. The following theorem proposes KA-type representation for finite abelian groups. Throuout the section, let us denote $X^\ast := X \setminus \{0\}$ for any set $X$.
\begin{thm}[abelian group representation]\label{thm:abel}
Let $G$ be a finite abelian group represented by
\begin{equation}\label{eq:num_gen}
G \cong C_{q_1} \times \dots C_{q_m},
\end{equation}
where $C_{q_j}$ denotes a cyclic group of order $q_j \in \mathbb{N}$, and $m$ denotes the number of generators of $G$. Then there exist $\rho_{<G,\circ>}:G\to  (\mathbb{C}^*)^m,\psi_{<G,\circ>}:(\mathbb{C}^*)^m \to G$ where 
\begin{align} 
    x_1 \circ \dots \circ x_n = \psi_{<G,\circ>}( \sum_{i=1}^n \log(\rho_{<G,\circ>}(x_i)))  = \psi_{<G,\circ>}( \sum_{i=1}^n \phi_{<G,\circ>}(x_i)), 
\end{align} 
for any $x_1, \dots , x_n \in G$, where $\phi_{<G,\circ>}(\cdot):= \log(\rho_{<G,\circ>}(\cdot))$. 
\end{thm}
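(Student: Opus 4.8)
The plan is to realize the abstract isomorphism in~\eqref{eq:num_gen} concretely by characters valued in roots of unity, so that the group operation becomes a coordinatewise product in $(\mathbb{C}^*)^m$, and then to turn that product into a sum by taking a branch of the complex logarithm, absorbing the passage back to $G$ into $\psi_{<G,\circ>}$. Concretely, first I would fix, by the fundamental theorem of finite abelian groups cited above, an isomorphism $\theta:G\to C_{q_1}\times\dots\times C_{q_m}$ and write $\theta(x)=(a_1(x),\dots,a_m(x))$ with $a_j(x)\in\mathbb{Z}_{q_j}$; by definition of the direct product, $a_j(x\circ y)=a_j(x)+a_j(y)\Mod{q_j}$ for each $j$. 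For each $j$ pick a primitive $q_j$-th root of unity $\omega_j:=e^{2\pi i/q_j}$ and set
\[
\rho_{<G,\circ>}(x):=\big(\omega_1^{a_1(x)},\dots,\omega_m^{a_m(x)}\big)\in(\mathbb{C}^*)^m.
\]
The crucial observation is that $\rho_{<G,\circ>}$ is an injective group homomorphism, i.e. an isomorphism onto the subgroup $\mu_{q_1}\times\dots\times\mu_{q_m}$ of $(\mathbb{C}^*)^m$, where $\mu_q$ denotes the $q$-th roots of unity: since $\omega_j^{q_j}=1$, the value $\omega_j^{a_j(x)}$ depends only on $a_j(x)\Mod{q_j}$, so $\rho_{<G,\circ>}(x\circ y)=\rho_{<G,\circ>}(x)\cdot\rho_{<G,\circ>}(y)$ with the product taken coordinatewise, and injectivity follows because $\theta$ is bijective and $k\mapsto\omega_j^k$ is injective on $\mathbb{Z}_{q_j}$.

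Next, by induction on $n$ the homomorphism property gives $\rho_{<G,\circ>}(x_1\circ\dots\circ x_n)=\prod_{i=1}^n\rho_{<G,\circ>}(x_i)$ (coordinatewise product). Fix the principal branch of the complex logarithm, applied coordinatewise, and put $\phi_{<G,\circ>}:=\log\circ\,\rho_{<G,\circ>}$, so that $\exp\big(\phi_{<G,\circ>}(x_i)\big)=\rho_{<G,\circ>}(x_i)$ regardless of the branch. Although $\log$ is not additive, $\exp$ is multiplicative, hence
\begin{align*}
\exp\!\Big(\sum_{i=1}^n\phi_{<G,\circ>}(x_i)\Big)
=\prod_{i=1}^n\exp\!\big(\phi_{<G,\circ>}(x_i)\big)
=\prod_{i=1}^n\rho_{<G,\circ>}(x_i)
=\rho_{<G,\circ>}(x_1\circ\dots\circ x_n).
\end{align*}
Finally I would define $\psi_{<G,\circ>}$ to be the inverse of the bijection $\rho_{<G,\circ>}:G\to\rho_{<G,\circ>}(G)$ precomposed with $\exp$, i.e. $\psi_{<G,\circ>}(w):=\rho_{<G,\circ>}^{-1}(\exp(w))$ whenever $\exp(w)\in\rho_{<G,\circ>}(G)$, and extended arbitrarily (say by the identity $e$) elsewhere; this is well defined since $\rho_{<G,\circ>}$ restricted to $G$ is a bijection onto the finite set $\mu_{q_1}\times\dots\times\mu_{q_m}$. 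Combining the display above with this definition yields $\psi_{<G,\circ>}\big(\sum_{i=1}^n\phi_{<G,\circ>}(x_i)\big)=\rho_{<G,\circ>}^{-1}\big(\rho_{<G,\circ>}(x_1\circ\dots\circ x_n)\big)=x_1\circ\dots\circ x_n$; and since none of $\rho_{<G,\circ>},\phi_{<G,\circ>},\psi_{<G,\circ>}$ depends on $n$, the representation is uniform in the number of inputs, which is the feature exploited later for embedding transfer.

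The routine parts are the bookkeeping with $\theta$ and the coordinatewise algebra. The only genuine subtlety is the multivaluedness of $\log$ on $\mathbb{C}^*$: one cannot assert $\sum_i\phi_{<G,\circ>}(x_i)=\phi_{<G,\circ>}(x_1\circ\dots\circ x_n)$, because $\phi_{<G,\circ>}$ is not a homomorphism into the additive group $\mathbb{C}^m$. The fix, as above, is to package an exponential into $\psi_{<G,\circ>}$ so that only multiplicativity of $\exp$ (valid for any branch) is used; this reading also reconciles the fact that $\sum_i\phi_{<G,\circ>}(x_i)$ lies in $\mathbb{C}^m$ with the stated domain $(\mathbb{C}^*)^m$ of $\psi_{<G,\circ>}$, as $\psi_{<G,\circ>}$ is effectively evaluated at $\exp$ of its argument.
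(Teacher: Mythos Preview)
Your argument is correct and lands on the same construction as the paper: embed $G$ into $(\mathbb{C}^*)^m$ via one-dimensional characters of the cyclic factors, take componentwise logarithms, and set $\psi=\rho^{-1}\circ\exp$ (extended off the image). The paper obtains the characters $\rho_j$ by invoking representation-theoretic lemmas (Schur's lemma for one-dimensionality of irreducibles of abelian groups, and a result of Gasch\"utz for faithfulness on cyclic groups), whereas you simply write $\rho_j(x)=e^{2\pi i\,a_j(x)/q_j}$ and verify the homomorphism and injectivity by hand; these are the same maps, so the difference is expository rather than strategic, though your version is more self-contained. For the branch ambiguity of $\log$, the paper inserts an explicit mod-$2\pi i$ reduction $T$ inside $\psi$ and appeals to a lemma that $T(\log z_1+\log z_2)=T(\log(z_1z_2))$; you instead rely directly on the multiplicativity of $\exp$, which is cleaner and equivalent since $\exp\circ T=\exp$. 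Your closing observation that $\rho,\phi,\psi$ are independent of $n$ is exactly the point the paper stresses immediately after the theorem.
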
 
\begin{proof}
A complete proof is provided in Appendix \ref{proof_thm_abel}.
\end{proof}


To highlight the difference between Theorem~\ref{thm:emb_n} and Theorem~\ref{thm:abel}, we emphasize that the embedding and outer functions are independent of $n$, which denotes the number of inputs. Instead, they depend solely on the operation when $<G,\circ>$ forms an abelian group. This salient feature allows us to use the same embedding function obtained from the operation $x_1 \circ x_2$ for learning $x_1 \circ \cdots \circ x_n$ as they share the common embedding. More precisely, the function $f:G \times G \rightarrow G$ given by $f(x_1,x_2)=x_1 \circ x_2$ can be represented using universal embedding $\phi_{<G,\circ>}:G\to(\mathbb{C}^*)^m$ and outer function $\psi_{<G,\circ>}:(\mathbb{C}^*)^{m}\to G$ with the embedding dimension of $2m$ where $m$ denotes the number of generators of $G$ as in~\eqref{eq:num_gen}, that is,
\[
x_1 \circ x_2 = \psi_{<G,\circ>}(\sum_{i=1}^2 \phi_{<G,\circ>}(x_i)).
\]
Here, the embedding functions are ubiquitous for operations $x_1\circ\dots\circ x_n$ for any $n$, 
\[
f_n(x_1,...,x_n)=x_1\circ\dots\circ x_n = \psi_{<G,\circ>}(\sum_{i=1}^n \phi_{<G,\circ>}(x_i)).
\]
with the same $\phi$ and $\psi$ used for the binary operation.


In particular, when $m=1$ implying that $G$ is a cyclic group, we have a simpler representation for the composition of arithmetic operations, which is a direct consequence of Theorem~\ref{thm:abel}.
\begin{cor}[cyclic group representation]\label{cor:cyc}
\label{cor:cor1}
    If $G$ is a finite cyclic group, then there exists the KA representation  $\rho_{<G,\circ>}:G\to\mathbb{C}^\ast,\psi_{<G,\circ>}:\mathbb{C}^\ast \to G$ such that 
\begin{align} 
    x_1 \circ \dots \circ x_n = \psi_{<G,\circ>}( \sum_{i=1}^n \log(\rho_{<G,\circ>}(x_i)))  = \psi_{<G,\circ>}( \sum_{i=1}^n \phi_{<G,\circ>}(x_i)), 
\end{align} 
for any $x_1, \dots , x_n \in G$, where $\phi_{<G,\circ>}(\cdot) = \log(\rho_{<G,\circ>}(\cdot))$.  
\end{cor}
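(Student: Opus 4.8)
The plan is to read off Corollary~\ref{cor:cyc} as the case $m=1$ of Theorem~\ref{thm:abel}. First I would observe that a finite cyclic group $G$ is, by definition, generated by a single element, so in the decomposition~\eqref{eq:num_gen} one has $m=1$ and $G \cong C_q$ with $q = |G|$. Substituting $m=1$ into Theorem~\ref{thm:abel} collapses the ambient space $(\mathbb{C}^*)^m$ to $\mathbb{C}^*$ and directly produces maps $\rho_{<G,\circ>} : G \to \mathbb{C}^*$ and $\psi_{<G,\circ>} : \mathbb{C}^* \to G$ satisfying
\[
x_1 \circ \dots \circ x_n = \psi_{<G,\circ>}\Big( \sum_{i=1}^n \log(\rho_{<G,\circ>}(x_i)) \Big) = \psi_{<G,\circ>}\Big( \sum_{i=1}^n \phi_{<G,\circ>}(x_i) \Big)
\]
for all $n$ and all $x_1,\dots,x_n \in G$, with $\phi_{<G,\circ>} = \log \circ\, \rho_{<G,\circ>}$. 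No additional argument is required; the corollary is precisely the one-generator instance of the theorem.

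For the statement to stand on its own, I would also exhibit the maps concretely, which is short. Fix a generator $g$ of $G$, use the isomorphism $G \cong \mathbb{Z}_q$ sending $g^k \mapsto k$ for $0 \le k < q$, and let $\zeta = e^{2\pi i/q}$. Set $\rho_{<G,\circ>}(g^k) = \zeta^k$ and choose the lift $\log \rho_{<G,\circ>}(g^k) = \tfrac{2\pi i}{q} k$, so that $\sum_{i=1}^n \phi_{<G,\circ>}(x_i) = \tfrac{2\pi i}{q}(k_1 + \dots + k_n)$. Defining $\psi_{<G,\circ>}$ on this finite set of values by $\tfrac{2\pi i}{q}\, s \mapsto g^{\,s \bmod q}$ (and arbitrarily elsewhere) yields $\psi_{<G,\circ>}(\sum_i \phi_{<G,\circ>}(x_i)) = g^{(k_1 + \dots + k_n) \bmod q} = x_1 \circ \dots \circ x_n$.

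The only point that needs care — and it is inherited verbatim from the proof of Theorem~\ref{thm:abel} in Appendix~\ref{proof_thm_abel} — is committing to a single branch of $\log$: one must pin each $\log \rho_{<G,\circ>}(x_i)$ to its representative with imaginary part in $[0,2\pi)$ so that the sum of logarithms tracks the integer $k_1 + \dots + k_n$ rather than its residue modulo $q$, leaving the final reduction to be absorbed into $\psi_{<G,\circ>}$. Since $\psi_{<G,\circ>}$ is an arbitrary (not necessarily continuous) function whose argument ranges over a finite set, this causes no difficulty. I therefore do not expect a genuine obstacle here: the corollary is essentially bookkeeping on top of Theorem~\ref{thm:abel}, whose proof already contains the substantive content.
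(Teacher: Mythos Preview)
Your proposal is correct and matches the paper's treatment exactly: the paper states the corollary as ``a direct consequence of Theorem~\ref{thm:abel}'' with $m=1$ and gives no separate proof. Your additional explicit construction of $\rho$ and $\psi$ via roots of unity is not required, but it parallels the paper's Example~\ref{ex:add} and is a fine illustration.
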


\begin{example}[modular addition]\label{ex:add}
    For a prime number $p$, let $\circ$ be the modular addition in $\mathbb{Z}_p$ defined as
    \begin{align*}
         x_1 \circ x_2 := x_1 + x_2 \Mod{p}.
    \end{align*}
Then $<\mathbb{Z}_p , \circ>$ becomes a cyclic group with order $p$. By Corollary \ref{cor:cor1}, there exists universal KA representation for modular addition. In fact, we can find the following KA representation with the embedding dimension of $2$:
\begin{align*}
   \rho(x)&=   \exp( \frac{2 \pi i x}{p})  \in \mathbb{C}^* \cong \mathbb{R}^2 \setminus \{0\}, \quad  x \in \mathbb{Z}_p, \\ 
   \psi(z)&=\frac{p}{2 \pi i} T(z), \quad z \in \mathbb{C}^*, 
\end{align*}
where $T:\mathbb{C}\to\mathbb{C}$ is defined as $T(a+bi)=a+\tilde{b}$ where $a,b \in \mathbb{R}, \; b = \tilde{b} + 2 \pi m_b $ for $\tilde{b} \in [0,2\pi), m_b \in \mathbb{Z}$.  
Letting $x_1 + x_2 = m_1+ m_2 p $ for $m_1 \in [0,p-1]$ and $m_2 \in\mathbb{Z}$, we have
\begin{align*}
    \psi( \sum_{i=1}^n  \phi(x_i)) &= \psi( \sum_{i=1}^n  \log(\rho(x_i) )) \\
    &= \frac{p}{2 \pi i}   T\big(   \frac{2 \pi i x_1}{p} + \frac{2 \pi i x_2}{p}   \big) \\
    &= \frac{p}{2 \pi  }  \big( \frac{2 \pi  m_1} {p} \big)\\
    &=m_1 = x_1 + x_2 \Mod{p}.
\end{align*}
Note that the representation $\phi,\psi$ are not unique. In fact, for any $k \in \mathbb{Z}$ coprime with $p$, we have another KA representing by setting
\begin{align*}
   \rho_k(x)&= \exp( \frac{2 \pi i x k }{p})  \in \mathbb{C}^* \cong \mathbb{R}^2\setminus\{0\}, \quad  x \in \mathbb{Z}_p, \\ 
   \psi_k(z)&=\frac{p}{2 \pi i k }T(z), \quad z \in \mathbb{C}^*. 
\end{align*}
\end{example}

\begin{example}[modular multiplication]\label{ex:mul}
Let $\circ$ denote the modular multiplication in $\mathbb{Z}_p$ defined as
    \begin{align*}
         x_1 \circ x_2 := x_1  \times x_2 \Mod{p}.
    \end{align*}
Since $0$ has no inverse, $\mathbb{Z}$ does not form a group with modular multiplication. We instead consider $\mathbb{Z}_p^\ast=\mathbb{Z}_p \setminus\{0\}$ for which one can find $a\in [2,p-1]$ such that $\{a,a^2,...,a^{p-1}\}=\{1,...,p-1\}$ modulo $p$. 
Hence, by Corollary \ref{cor:cor1}, there exists a KA representation for $\mathbb{Z}_p^\ast$. To illustrate, let us consider
\begin{align*}
   \rho(x)&= \exp( \frac{2 \pi i \lg_a(x)   }{p}) \in  \mathbb{C}^* \cong \mathbb{R}^2\setminus\{0\}, \quad  x \in \mathbb{Z}_p^\ast, \\ 
   \psi(z)& = \exp_a( \frac{p}{2 \pi i} T(z) ), \quad z \in \mathbb{C}^\ast, 
\end{align*}
where $\lg_a$ denotes the discrete logarithm of base $a$ in $\mathbb{Z}_p^\ast$, i.e., if $a^n = b \Mod{p}$, then $\lg_a b =n$. Let $\exp_a(x):=a^x$ denote the exponential with base $a$ in $\mathbb{Z}_p^\ast$ for brevity. For $x_1,x_2 \in \mathbb{Z}_p^\ast$, let $x_1 = a^{e_1}, x_2 = a^{e_2}$, $e_1,e_2 \in \mathbb{Z}_p^*$. Then we have 
\begin{align*}
    \sum_{i=1}^2 \phi(x_i) &=  \sum_{i=1}^2 \log(\rho(x_i)) \\
    &=  \frac{2 \pi i \lg_a(x_1)}{p} +   \frac{2 \pi i \lg_a(x_2)}{p}   \\
    &=  \frac{2 \pi i (e_1+e_2)}{p}
\end{align*}
and
\begin{align*}
    \psi(\sum_{i=1}^n \phi(x_i))&= \exp_a \big( \frac{p}{2 \pi i} T(  \frac{2 \pi i (e_1+e_2)}{p} )\big)  \\
    &= \exp_a(e_3) \\
    &= x_1 \times x_2 \Mod{p},
\end{align*}
where $e_3 = e_1 + e_2 \Mod{p}$ had $T$ is same as in Example~\ref{ex:add}.

Note that such a representation $\phi,\psi$ is not unique. For any $k \in \mathbb{Z}$ coprime with $p-1$, we have another representation by setting
\begin{align*}
   \rho_k(x)&= \exp( \frac{2 \pi i \lg_a(x) k  }{p})  \in \mathbb{C}^\ast  \cong \mathbb{R}^2 \setminus \{0\}, \quad  x \in \mathbb{Z}_p^\ast, \\ 
   \psi_k(z)&= \exp_2( \frac{p}{2 \pi i k } T(z)), \quad z \in \mathbb{C}^\ast.
\end{align*}

In addition, for $p \geq 3$, let us write $p-1 = q_1 \times q_2$ for $ q_1,q_2 (\ne 1)\in \mathbb{N}$. With $x = a^{e_x}, e_x \in \mathbb{Z}_p$ and   $e_x  = b_x q_1 + r_x$ for $r \in [0,q_1-1]$ and $b \in \mathbb{Z}$, we have another KA representation given as
\begin{align*}
   \rho_k(x)&= \big( \exp( \frac{2 \pi i b_x k_1  }{q_1}), \exp( \frac{2 \pi i r_x k_2  }{q_2}) \big)  \in (\mathbb{C}^\ast)^2, \quad  x \in \mathbb{Z}_p, \\ 
   \psi_k(z)&= \exp_a( \frac{p}{2 \pi i k_1 } w_1 )\times q_1 + \exp_a( \frac{p}{2 \pi i k_2 } w_2 ) , \quad  (z_1,z_2) \in (\mathbb{C}^*)^2, 
\end{align*}
where $(w_1,w_2) = T_2(z_1,z_2), T_2:\mathbb{C}^2\to\mathbb{C}^2$ is defined as 
\[
T_2(z_1,z_2)=(T(z_1+R(z_2)), T(z_2))\quad\text{and}\quad R(a+bi) = 2 \pi m_b i, 
\]
where $a,b \in \mathbb{R}, \; b = \tilde{b} + 2 \pi m_b $ for $\tilde{b} \in [0,2 \pi)$, $m_b \in \mathbb{Z}$.  
\end{example}

\subsection{KA representation theorem for an anti-abelian operation}
We propose that the KA representation theorem can be further extended to anti-abelian operations. We say an operation $\bullet$ is anti-abelian for the abelian group $<G,\circ>$, if $x_1 \bullet x_2 = x_1 \circ x_2^{-1}$ where $ x_2^{-1}$ is an inverse of $x_2$. Let us set $\phi$ and $\psi$ be embedding and outer functions corresponding to the binary operation $\circ$. Then, we have the following that provides evidence for the effectiveness of leveraging weight transfer.
\begin{thm}
[anti-abelian operation]\label{thm:anti_abel}
Let $\bullet$ be an anti-abelian binary operation for the abelian group operation $\circ$. Let $m$ denote the number of generators of $G$. 
For any  $\ast$, there exist
\[
\phi_{<G,\circ>}:G\to(\mathbb{C}^{\ast})^m \quad \text{and} \quad \psi_{<G,\circ>}:(\mathbb{C}^{\ast})^m\to G, 
\]
such that 
\begin{align*} 
    x_1 \bullet x_2   &= \psi_{<G,\circ>} \big(\log(\rho_{<G,\circ>}(x_1)) - \log(\rho_{<G,\circ>}(x_2)) \big)\\
    &=  \psi_{<G,\circ>} \big( \phi_{<G,\circ>}(x_1) - \phi_{<G,\circ>}(x_2) \big),
\end{align*} 
where $\phi_{<G,\circ>}(\cdot) =\log(\rho_{<G,\circ>}(\cdot))$.     
\end{thm}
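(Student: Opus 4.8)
The plan is to show that the embedding $\phi_{<G,\circ>}=\log\circ\,\rho_{<G,\circ>}$ furnished by Theorem~\ref{thm:abel} for the abelian operation $\circ$ can be reused verbatim for the anti-abelian operation $\bullet$, and that only the outer function needs a harmless modification. Everything reduces to one observation: in a finite abelian group, inversion is coordinatewise negation in each cyclic factor, and the roots-of-unity embedding converts this into coordinatewise reciprocal.

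First I would recall the explicit form of $\rho_{<G,\circ>}$ used in the proof of Theorem~\ref{thm:abel}: fix the isomorphism $G\cong C_{q_1}\times\cdots\times C_{q_m}$, write $x\leftrightarrow(c_1(x),\dots,c_m(x))$ with $c_j(x)\in\{0,\dots,q_j-1\}$, and set $\rho_{<G,\circ>}(x)=\big(e^{2\pi i c_1(x)/q_1},\dots,e^{2\pi i c_m(x)/q_m}\big)\in(\mathbb{C}^\ast)^m$, so that $\rho_{<G,\circ>}$ sends each generator of $C_{q_j}$ to a primitive $q_j$-th root of unity in the $j$-th slot and is therefore a group homomorphism. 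Consequently $\phi_{<G,\circ>}(x)$ has purely imaginary coordinates $2\pi i c_j(x)/q_j$, and addition of embeddings tracks addition of the exponents $c_j$ modulo $q_j$. Next I would note that $c_j(x^{-1})=(-c_j(x))\bmod q_j$ for every $j$, hence $\rho_{<G,\circ>}(x^{-1})=\rho_{<G,\circ>}(x)^{-1}$ (coordinatewise reciprocal in $(\mathbb{C}^\ast)^m$); equivalently $\phi_{<G,\circ>}(x^{-1})+\phi_{<G,\circ>}(x)\in 2\pi i\,\mathbb{Z}^m$, i.e. a vector each of whose coordinates lies in $2\pi i\mathbb{Z}$.

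Now apply Theorem~\ref{thm:abel} with $n=2$ to the pair $(x_1,x_2^{-1})$:
\[
x_1\bullet x_2 = x_1\circ x_2^{-1} = \psi_{<G,\circ>}\big(\phi_{<G,\circ>}(x_1)+\phi_{<G,\circ>}(x_2^{-1})\big).
\]
By the previous step the argument equals $\phi_{<G,\circ>}(x_1)-\phi_{<G,\circ>}(x_2)$ up to a vector in $2\pi i\,\mathbb{Z}^m$. The outer function built in Theorem~\ref{thm:abel} accesses its argument only through the reduction of imaginary parts modulo $2\pi$ (the maps $T$, $T_2$ of Examples~\ref{ex:add}--\ref{ex:mul}), so it is invariant under translation by $2\pi i\,\mathbb{Z}^m$; if one prefers, replace $\psi_{<G,\circ>}$ by its precomposition with this mod-$2\pi$ reduction, which leaves its values on the relevant sums of embeddings unchanged. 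Either way one obtains
\[
x_1\bullet x_2=\psi_{<G,\circ>}\big(\phi_{<G,\circ>}(x_1)-\phi_{<G,\circ>}(x_2)\big)=\psi_{<G,\circ>}\big(\log(\rho_{<G,\circ>}(x_1))-\log(\rho_{<G,\circ>}(x_2))\big),
\]
with the same embedding dimension $2m$, as claimed; the cyclic case $m=1$ follows at once.

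The main obstacle---indeed the only delicate point---is the branch-of-logarithm bookkeeping in the last step: one must check that $\log\rho_{<G,\circ>}(x^{-1})$ and $-\log\rho_{<G,\circ>}(x)$ differ only by an element of $2\pi i\,\mathbb{Z}^m$ and that the chosen $\psi_{<G,\circ>}$ is genuinely periodic with respect to such shifts (it is, because it factors through $\exp$). The group-theoretic content---inversion corresponds to coordinatewise reciprocal under $\rho_{<G,\circ>}$---is immediate from the fundamental theorem of finite abelian groups, so no additional structural work is required.
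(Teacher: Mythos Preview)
Your argument is correct, and it rests on the same crux as the paper's: the outer function $\psi_{<G,\circ>}=\rho_{<G,\circ>}^{-1}\circ\exp\circ T$ built in Theorem~\ref{thm:abel} is invariant under translation of its argument by vectors in $2\pi i\,\mathbb{Z}^m$. The framings differ. You apply Theorem~\ref{thm:abel} directly to the pair $(x_1,x_2^{-1})$ and invoke the homomorphism identity $\rho_{<G,\circ>}(x^{-1})=\rho_{<G,\circ>}(x)^{-1}$ to convert $\phi_{<G,\circ>}(x_2^{-1})$ into $-\phi_{<G,\circ>}(x_2)$ modulo $2\pi i\,\mathbb{Z}^m$. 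The paper instead sets $x_3:=x_1\bullet x_2$, uses $x_2\circ x_3=x_1$, applies Theorem~\ref{thm:abel} to write $\psi_{<G,\circ>}(\log\rho(x_2)+\log\rho(x_3))=x_1$, unwinds via $\psi_{<G,\circ>}=\rho^{-1}\circ\exp\circ T$ to obtain $\log\rho(x_2)+\log\rho(x_3)=\log\rho(x_1)+2\pi i\,m_1$, and then rearranges. Your route makes the group-theoretic content (inversion $\leftrightarrow$ coordinatewise reciprocal under $\rho$) explicit and is arguably cleaner; the paper's version bypasses any direct handling of $\rho$ on inverse elements by working purely with the relation $x_2\circ x_3=x_1$. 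Neither approach requires any idea the other lacks.
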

\begin{proof}
In this proof, we follow the notations from the proof of Theorem \ref{thm:abel}. 
Let $x_3 := x_1 \bullet x_2$. Then we have $x_2 \circ x_3 = x_1$. Since $\circ$ is an abelian group operation, by Theorem \ref{thm:abel}, there exist $\rho_{<G,\circ>},\phi_{<G,\circ>} : G \to (\mathbb{C}^\ast)^m$, $\psi_{<G,\circ>} : (\mathbb{C}^\ast)^m \to G$ and $T:\mathbb{C} \to \mathbb{C}$  such that
\begin{align*}
    \psi_{<G,\circ>}  \big( \log(\rho_{<G,\circ>}(x_2)) + \log(\rho_{<G,\circ>}(x_3)) \big) = x_1,
\end{align*}
where $\rho_{<G,\circ>},\phi_{<G,\circ>},\psi_{<G,\circ>},T$ are defined in the proof of Theorem \ref{thm:abel}. Since $\psi_{<G,\circ>}(\cdot) = \rho_{<G,\circ>}^{-1}(\exp(T(\cdot)))$, we have 
\begin{align*}
    &\psi_{<G,\circ>}  \big( \log(\rho_{<G,\circ>}(x_2)) + \log(\rho_{<G,\circ>}(x_3)) \big) \\
    &= \rho_{<G,\circ>}^{-1} \bigg( \exp \bigg( T \bigg( \log(\rho_{<G,\circ>}(x_2)) + \log(\rho_{<G,\circ>}(x_3)) \bigg) \bigg)\bigg)\\
    &= x_1, \\
     &T \big( \log(\rho_{<G,\circ>}(x_2)) + \log(\rho_{<G,\circ>}(x_3)) \big) = \log(x_1), 
\end{align*}
and
\begin{align*}
     &\log(\rho_{<G,\circ>}(x_2)) + \log(\rho_{<G,\circ>}(x_3))  = \log(\rho_{<G,\circ>}(x_1)) + (2 \pi i) m_1  
\end{align*}
for some $m_1 \in \mathbb{Z}$.
Therefore, we have 
\begin{align*}
    \log(\rho_{<G,\circ>}(x_1)) - \log(\rho_{<G,\circ>}(x_2))  = \log(\rho_{<G,\circ>}(x_3))  - (2 \pi i) m_1,
\end{align*}
leading to
\begin{align*}
    \psi \big( \log(\rho_{<G,\circ>}(x_1)) - \log(\rho_{<G,\circ>}(x_2)) \big) = x_3.  
\end{align*}
\end{proof}
\subsection{KA representation and transformer}
We discuss the KA representations of basic arithmetic operations and how they are related. Throughout the section, we suppress $G$ in embedding and outer functions for brevity. First, given arithmetic operation $\circ$, we have the following KA representation
\[   x_1 \circ \dots \circ x_n =  \psi_\circ ( \sum_{i=1}^n \lambda_i \phi_\circ (x_i)), \]
where $\phi_\circ: G \to \mathbb{R}^d$, $\lambda_i \in \mathbb{R}$, $\psi_\circ: \mathbb{R}^d \to G$ with the embedding dimension $d \in \mathbb{N}$. 
We decompose the above representation into three parts: $\Phi$, $\Sigma$, and $\Psi$ where
\begin{align*}
    &\Phi(x_1 , \dots , x_n) = [\phi_\circ(x_1), \dots , \phi_\circ(x_n)], \quad \phi(x_i) \in \mathbb{R}^d, \\
    &\Sigma(\textbf{z}_1 , \dots , \textbf{z}_n) = \sum_{i=1}^n \lambda_i \textbf{z}_i \in \mathbb{R}^d, \\
    &\Psi( \textbf{z}) = \psi_\circ(\textbf{z}) \in G
\end{align*}
for $\textbf{z}_1 , \dots , \textbf{z}_n, \textbf{z} \in \mathbb{R}^d$.
Then we have the following decomposition of KA representation:
\[   x_1 \circ \dots \circ x_n =  \psi_\circ ( \sum_{i=1}^n \lambda_i \phi_\circ (x_i)) = \Psi(\Sigma(\Phi(x_1, \dots , x_n))). \]

Now we explore which parts of the KA representation from various basic arithmetic operations are shared when equipped with algebraic structures, commutativity, abelian, and anti-abelian. Throughout the section, we will consider such operations defined in $\mathbb{Z}_p$.

\subsubsection{Sharing $\Sigma$ - commutative operations and anti-abelian}
Recalling Theorem~\ref{thm:emb_n}, we see that any commutative binary operation shares the common $\sum$. In particular, addition and multiplication can be expressed as
\[
x_1 + x_2 = \psi_+(\sum_{i=1}^2 \phi_+(x_i)), \quad x_1 \times x_2 = \psi_\times (\sum_{i=1}^2 \phi_\times(x_i)).
\]
Therefore, $x_1 + x_2$ and $x_1 \times x_2$ have common $\Sigma$ part with 
\[
\Sigma(\textbf{z}_1 , \textbf{z}_2 ) = \textbf{z}_1 + \textbf{z}_2.
\]

We extend the argument for the case of anti-abelian. Since subtraction and division are anti-abelian for addition and multiplication respectively they have the following representations by Theorem~\ref{thm:anti_abel}:
\[
x_1 - x_2 = \psi_+(\phi_+(x_1)- \phi_+(x_2)), \quad x_1 / x_2 = \psi_\times (\sum_{i=1}^2 \phi_\times(x_1) - \phi_\times(x_2)).
\]
Therefore, $x_1 - x_2$ and $x_1 / x_2$ have common $\Sigma$ part with 
\[
\Sigma(\textbf{z}_1 , \textbf{z}_2 ) = \textbf{z}_1 - \textbf{z}_2.
\]
In the following section, we continue our discussion with abelian groups.
\subsubsection{Sharing $\Phi$ - abelian}\label{sec:sharing_phi}
As discussed in Section~\ref{subsec:abelian_group} and Corollary \ref{cor:cyc}, if $<G,\circ>$ forms a finite abelian group, there exists ubiquitous embedding function $\phi_\circ$ and $\psi_\circ$, independent of the number of operands $n$, such that 
\[
x_1\circ\dots\circ x_n = \psi_\circ(\sum_{i=1}^n \phi_\circ(x_i)).
\]
Since both $<\mathbb{Z}_p,+>$ and $<\mathbb{Z}^*_p,\times>$ with $p\geq 2$ form abelian groups, we deduce that $x_1 + x_2$, $ x_1+x_2+x_3$, $x_1 + \dots + x_n$ share the common embedding function $\phi_+$. By the same argument, we argue that $x_1 \times x_2$, $x_1\times x_2 \times x_3$, $x_1 \times \dots \times x_n$ share the common embedding function $\phi_\times$.

\subsubsection{Leveraging shared part of KA representation}
The overall landscape for shared structure among arithmetic operations through the lens of representations is demonstrated in Figure~\ref{fig:KA}. Based on the observation, we propose various transfer learning approaches to leverage the sharing of representations within different operations, which is discussed in detail in the following section.
\begin{figure}[H]\centering
   \includegraphics[width=0.7\textwidth]{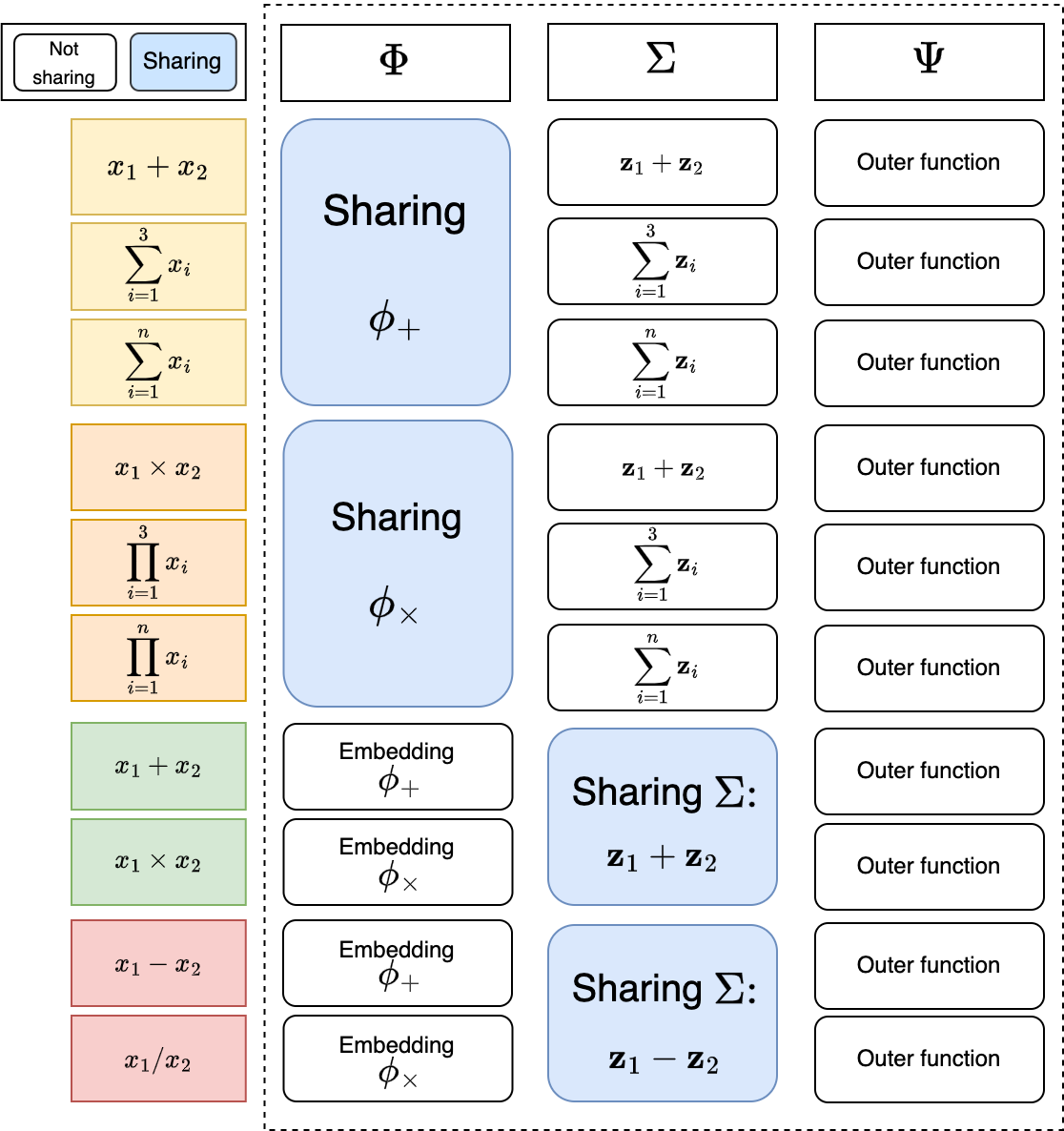} 
    \caption{Illustration of shared structures within KA representation for binary operations.}
    \label{fig:KA}
\end{figure}

\section{Methodology}
Focusing on arithmetic operations that include four elementary operations addition, subtraction, multiplication, and division as well as combinations of them, we propose a new method for learning various operations efficiently by taking algebraic properties of operations into account. Let us begin by introducing the data augmentation technique for learning commutative operation. When the operation is commutative, we add all commutative pairs into the training batch, which improves the training efficiency. The detailed procedure is demonstrated in Section~\ref{sec:method:aug}, and its empirical evidence is provided in the next section.

Leveraging the KA representation theorem, weight parameters of a decoder block or embedding layer that are learned from operations can be transferred for efficient learning of new operations as illustrated in Section~\ref{sec:method:weight}. Furthermore, the KA representation theorem for the finite abelian group given in Theorem~\ref{thm:abel} motivates us to transfer weights of the embedding layer to learn the composition of arithmetic operations, thereby acceleration of grokking is observed as presented in Section~\ref{sec:exp:wegiht_transfer}. We also argue that

Lastly, we also propose a new approach for learning arithmetic operations by introducing relational equations.

\subsection{Learning arithmetic operations with language model}\label{sec:aug}
Central to learning arithmetic operation $ a \circ b = c$ is to take ['$a$', '$\circ$', '$b$','$=$'] as an input of the transformer, and trains the model to predict the output $c$ accurately. We also extend our learning mechanism to predict the output of the composition of arithmetic operations, that is, $a_1 \circ a_2 \circ ... \circ a_n$.

Given a set $G$ and binary operation 
'$\circ$' associated with it, our task is to learn neural network model $c_i \approx f_\theta(a_i,b_i)$ where $\theta$ represents the neural network parameter. To this end, we randomly split the set $\{c:c=a \circ b, (a,b)\in G\times G\}$ into training and test set and train the model on the training set to predict the output. As noted, such a task often suffers from grokking when the training set is small, we propose a way of encompassing the algebraic nature of the binary relation to accelerate it, which is referred to \textit{degrokking}~\cite{tan2023understanding}.

\subsection{Augmentation for learning commutative binary operation}\label{sec:method:aug}
When the binary operation is commutative, we can always augment the dataset by adding commutative pairs. To this end, for all $(a,b) \in \mathcal{B}_{\text{train}}$, we update $\mathcal{B}_{\text{train}}\leftarrow \mathcal{B}_{\text{train}} \cup \{(b,a)\}$ during the training. Our method differs from that proposed by~\cite{tan2023understanding} as they modify the loss function by adding commutative pair loss. 

Our simple augmentation incentivizes the model to learn the commutative structure, and hence, acceleration of grokking is achieved. As mentioned, authors of ~\cite{tan2023understanding} introduce the difference between to commutative pairs to force the model to learn the intrinsic structure. We generalize the idea by simply adding commutative pairs of batch data, which also can be used to learn the composition of binary operations.

\subsection{Learning arithmetic operations via weight transfer}\label{sec:method:weight}

\begin{figure}[H]\centering
   \includegraphics[width=0.7\textwidth]{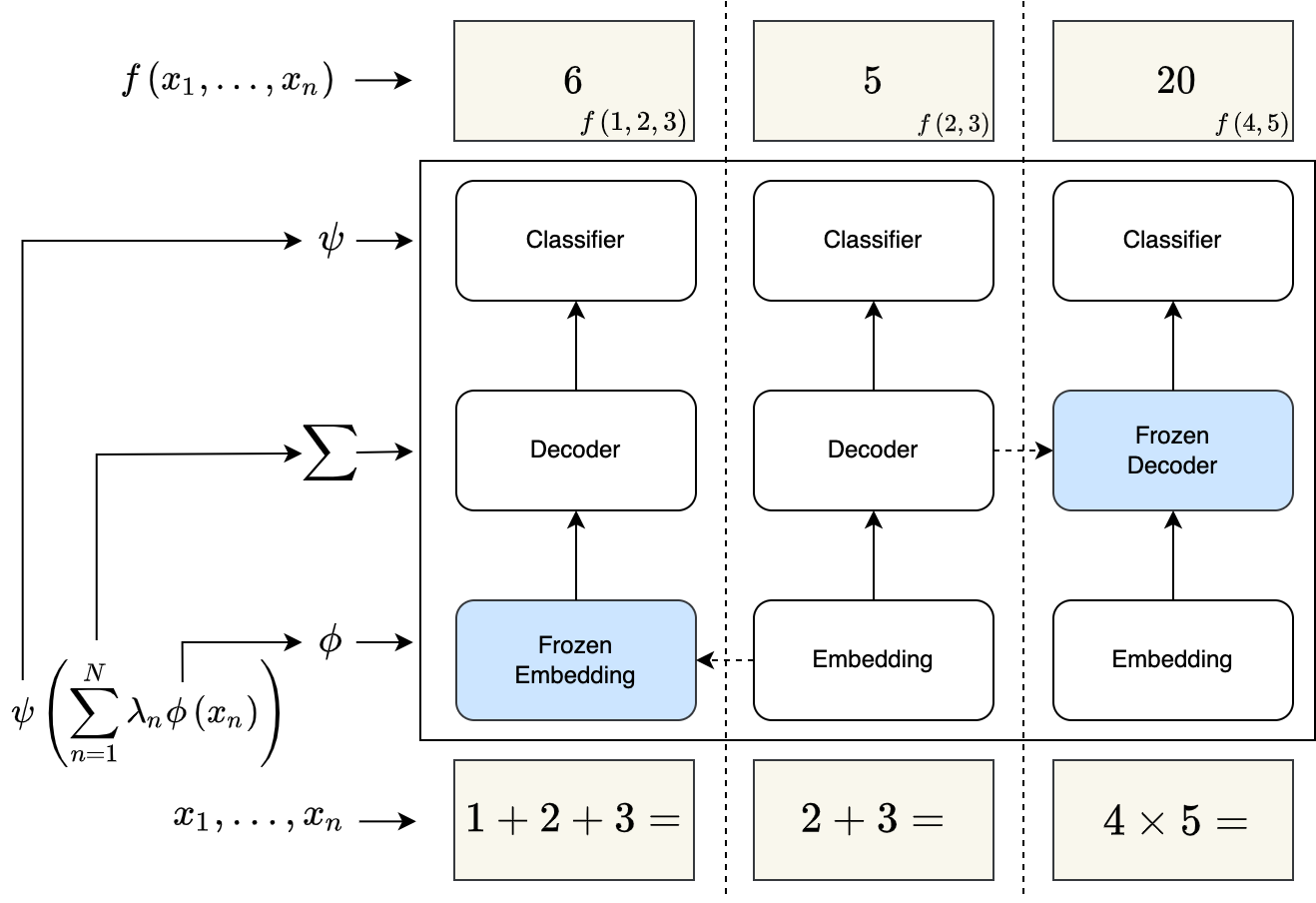} 
    \caption{Schematic diagram for weight transfer. Weights for learning addition (middle) are transferred to learn the composited operations (left) and other commutative binary operations (right).}
    \label{fig:transfer}
\end{figure}
We propose two types of weight transfer for learning binary operations efficiently, decoder block transfer and embedding transfer. 

\noindent\textbf{Decoder block transfer}: Thanks to Theorem~\ref{thm:abel}, we propose to learn commutative binary operation by transferring the weight of the decoder block learned from another commutative binary operation as any commutative binary operations have the same structural properties, $\lambda_1=\lambda_2=1$ and $n=2$. For justification, we first learn the whole transformer architecture with a commutative binary operation and transfer the weights of the decoder block. For learning a different commutative binary operation, the transferred decoder block is frozen, and only the embedding and classifier layer are learned. A similar perspective leads to a refined learning framework for an anti-abelian operation thanks to Theorem~\ref{thm:anti_abel}.  

\noindent\textbf{Embedding transfer}: When $<G,\circ>$ is equipped with a further structural property, $G$ is a finite abelian group with respect to the operation $\circ$, Theorem~\ref{thm:abel} yields that embedding dimension is fixed as $2m$ for $m=|G|$ no matter how many times the operation is composited. From this observation, one can learn the composition of a binary operation by transferring the weight of the embedding layer learned from the binary operation. An explicit flowchart is given in Figure~\ref{fig:transfer:system}.

\noindent\textbf{Learning a system of equations with unknowns:} Additionally, we design a nonstandard task where multiple unknowns in a system of equations are learned. To be specific, we aim to solve equations with unknown tokens $A$ and $B$ given $a,b,c \in G$ through the relation
\[
\begin{cases}
 &a \circ b = A \\
 &A \circ c = B \quad \text{ask}\quad B,
\end{cases}
\]
or
\[
\begin{cases}
&a \circ A = b \\
&A \circ c = B  \quad \text{ask}\quad A,
\end{cases}
\]
which can be used as inputs in the form of
\[
a \circ b = A \& A \circ c = B \quad B? \quad \text{and} \quad a \circ A = b \&A \circ c = B  \quad A?.
\]

Unlike the previous task, where we only predicted the output of an arithmetic operation, learning unknowns in a system of equations is much more complicated. We observe that solving unknowns in a system of equations is similar to solving the composition of operators. Referring to Section \ref{sec:sharing_phi}, we conjecture that the embedding layer in learning a system of equations involving an abelian operation inherits properties from the embedding used for the abelian operation itself.

Hence, we propose the idea of transferring the embedding trained with the binary operation. Unlike the task of the composition of operations, we introduce new tokens such as unknowns $A, B$ and the 'and' token denoted by $'\&'$. Consequently, it is not reasonable to freeze the entire embedding layer because these new tokens require their own representations. Instead, we argue for a hybrid approach that combines the transferred embedding layer with a new, trainable embedding layer to preserve the knowledge from the binary operation while accommodating new information. This concept is explicitly illustrated in Figure~\ref{fig:transfer}, which demonstrates how the transferred and new embedding layers interact to solve systems of equations with unknowns.

\begin{figure}[H]
\centering
   \includegraphics[width=0.7\textwidth]{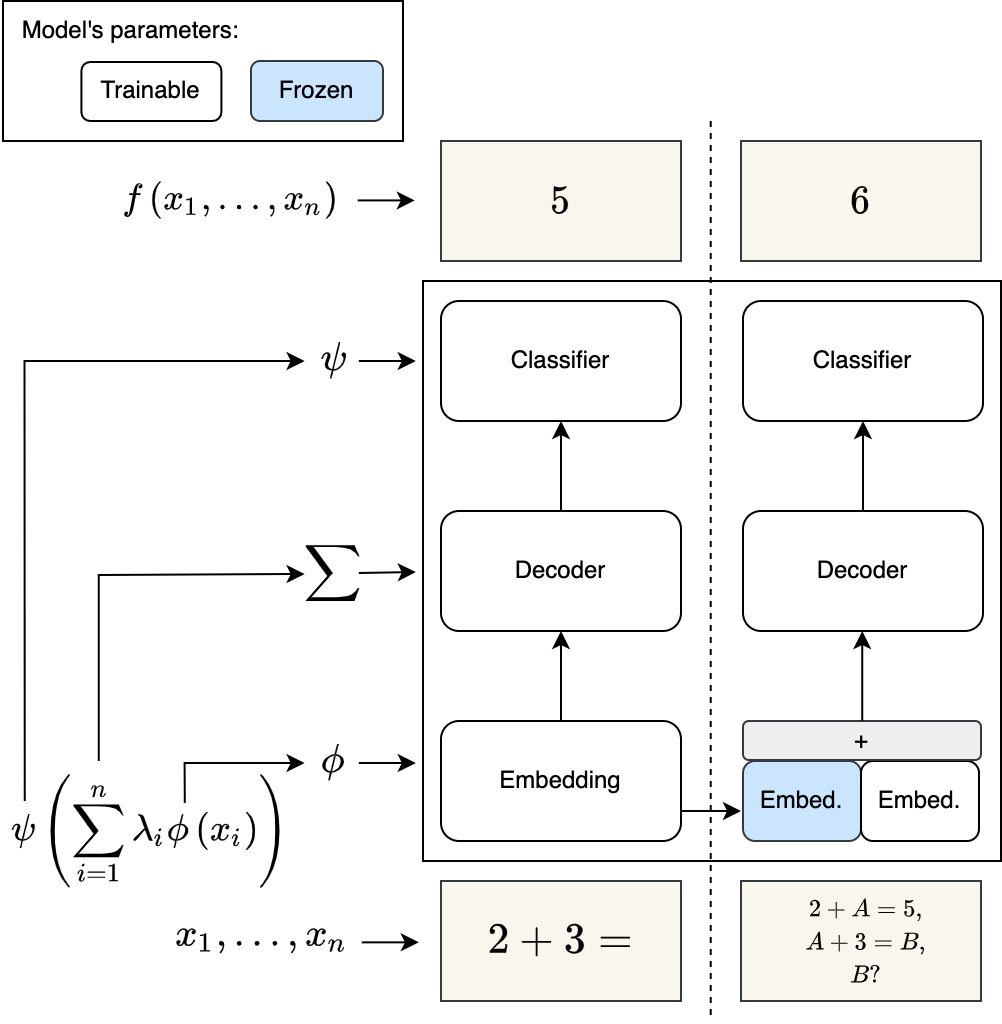} 
    \caption{Schematic diagram for learning a system of equations via weight transfer.}
    \label{fig:transfer:system}
\end{figure}

\section{Experiments}\label{sec:exp}

\subsection{Dataset and experiment setup}
As in the study~\cite{power2022grokking}, we train the transformer using an algorithmic dataset of binary relations defined on $\mathbb{Z}_p \times \mathbb{Z}_p$ for some prime number $p$. A comprehensive set of experiments is carried out, including (i) commutative data augmentation, (ii) decoder block transfer, (iii) embedding layer transfer, and (iv) implementation of a limited set of tokenization methods. We set $p=97$ for all experiments.

\noindent\textbf{Arithmetic binary operation:} For training, we use $x_1 \circ x_2$ where $x_1,x_2\in\mathbb{Z}_p$ as inputs and let model learn $x_3:= x_1 \circ x_2$. We consider various binary operations $x_1 \circ x_2$ given as
\begin{equation*}
\begin{split}
& x_1+x_2,\; x_1 \times x_2,\; x_1^2 + x_2^2,\; x_1^2+x_2^2+ x_1 +x_2,\; 
 x_1^3+x_2^3+x_1+x_2,\\
& x_1^2+x_2^2+x_1x_2,\; x_1- x_2,\; x_1/x_2, (x_1+x_2)^2,\; (x_1+x_2)^3.
\end{split}
\end{equation*}

\noindent\textbf{Composition of arithmetic operations:} Another task we are interested in is to learn $n \in \mathbb{N}$ compositions of a given binary operation, 
\[
x_1 \circ \cdots \circ x_n
\]
for $x_1,...,x_n \in \mathbb{Z}_p$. For validation of this task, we randomly sample $30,000$ data samples, and we also test if grokking has occurred with $100,000$ random data samples. 


\noindent\textbf{A system of equations with unknowns:} We solve a system of equations with two unknowns taking the form of 
\[
\begin{cases}
 &2 + A = 5 \\
 &A + 4 = B 
\end{cases}
\quad \text{ask} \quad B\quad \text{and} \quad 
\begin{cases}
 &1 + 2 = A \\
 &A + 3 = B 
\end{cases}
\quad \text{ask} \quad A,
\]
or
\[
\begin{cases}
 &2 \times A = 6 \\
 &A \times 4 = B 
\end{cases}
\quad \text{ask} \quad B
\quad \text{and} \quad 
\begin{cases}
 &1 \times 2 = A \\
 &A \times 3 = B 
\end{cases}
\quad \text{ask} \quad A.
\]
For validation, we again randomly sample $30,000$ data samples, and we also test if grokking has occurred with $100,000$ random data samples as above.

\noindent\textbf{Learning with a limited number of tokens:} We aim to train the model using a limited set of numbers instead of using all tokens in $\mathbb{Z}_p$ for the composition of arithmetic operations. Due to the nature of KA representation, we transfer the embedding layer learned with some arithmetic operations and learn $\Sigma$ only with a limited set of tokens to verify that the model indeed learns the pattern of operations rather than memorizing. We also test the approach using 100,000 random samples to determine whether grokking occurs.
\subsubsection{Model description}
All experiments utilize the transformer that consists of an embedding layer with four MLP modules, two decoder blocks, each with four attention heads, and an MLP classifier layer. Both the embedding layers and the attention dimensions are set to 256. 


\subsubsection{Training details}
For the size of the mini-batch, we leverage 1024 for learning binary operations and a system of equations with unknowns, and 4096 for the composition of operations. A weight decay is applied through the AdamW optimizer \cite{kingma2014adam,loshchilov2017decoupled} with a learning set and a decay rate of $0.001$ and $0.1$ respectively. We regard that the model is grokked when the test accuracy exceeds $99 \%$ within $10^5$ steps.

\noindent\textbf{Visualization of Embedding:} To visualize the correspondence between KA representation and the learned transformer for arithmetic operations, we present the visualization of the embedding of tokens. After learning addition and multiplication, the first two principal components of the embedding vectors for each token in $\mathbb{Z}_p$ are illustrated in Figure~\ref{fig:PCA}.

Regarding addition, we observe that the embedding vectors form a circle, which aligns with the KA representation given in Example~\ref{ex:add}. Denoting the angle between two tokens $i$ and $j$ by $\theta_{i,j}$, it is notable that the $\theta_{i,i+1}$ values are similar for all $i$, indicating that the embedding reflects the group structure of addition.

Similarly, the embedding vectors obtained from learning multiplication also form a circle when projected into 2-dimensional space via principal component analysis (PCA). As discussed in Example~\ref{ex:mul}, since $\mathbb{Z}_p \setminus {0}$ forms an abelian group, this representation also aligns with our observations.

These empirical results support the idea of a correspondence between the embedding function in the KA representation theorem and the embedding layer in the transformer architecture. Therefore, we expect that the decoder block functions as the intermediate summation part ($\Sigma$), underscoring the validity of the weight transfer method described in Section \ref{sec:method:weight}.



\begin{figure}[H]
\centering
   \includegraphics[width=0.7\textwidth]{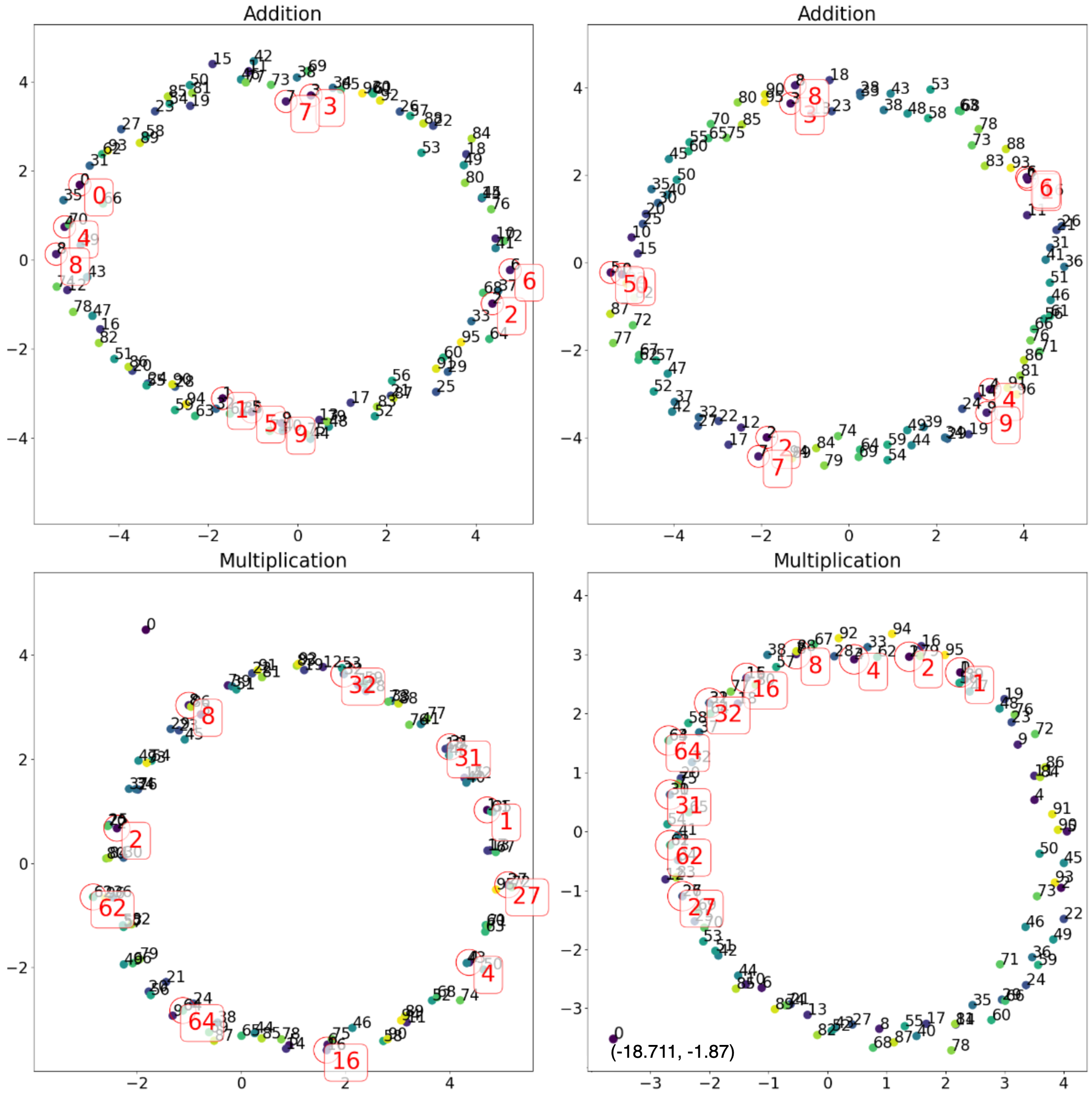} 
    \caption{PCA visualization for addition (top) and multiplication (bottom). We see that $\theta_{i,i+1}$'s, the angles between token $i$ and $i+1$, are maintained similarly. A similar feature is observed in the multiplication hinting that tokens form a multiplicative group. }
    \label{fig:PCA}
\end{figure}

\subsection{Learning various commutative arithmetic binary operations via augmentation}\label{sec:exp:aug}
As illustrated in Section~\ref{sec:method:aug}, we add commutative pairs during the training and compared the validation and test accuracy. We examine how increased exposure to commutative information affects the grokking behavior for various commutative arithmetic operations by comparing the initial optimization steps at which grokking occurs.

\begin{table}[H]
\caption{Number of steps required to achieve grokking via commutative augmentation (CA). Grokking is tested with full data besides training data. Here, $N$ denotes the number of training samples.}
\vspace{0.3cm}
\centering
\scalebox{0.6} {\begin{tabular}{@{}ccccc@{}}
\toprule
Operation & $N$ & Method & Grokking step & Final accuracy \\
\midrule
\multirow{6}{*}{$x_1+x_2$} 
&  \multirow{2}{*}{5000} 
 & baseline & 27167($\pm$ 28853) & Grokked \\
 &  & +CA  & 10171($\pm$ 7561) & Grokked \\
 \cmidrule{2-5}
&  \multirow{2}{*}{4000} 
 & baseline & 15359($\pm$ 0) & Grokked \\
 &  & CA  & 31699($\pm$ 0) & Grokked \\
  \cmidrule{2-5}
 &  \multirow{2}{*}{3000} 
 & baseline & Non-grokked & 33.53($\pm$ 0) \\
 &  & +CA  & Non-grokked & 53.14($\pm$ 0) \\

\midrule
\multirow{6}{*}{$x_1 \times x_2$} 
&  \multirow{2}{*}{5000} 
 & baseline & 32839($\pm$ 0) & Grokked \\
 &  & +CA  & 7419($\pm$ 0) & Grokked \\
  \cmidrule{2-5}
&  \multirow{2}{*}{4000} 
 & baseline & Non-grokked & 59.88($\pm$ 0)\\
 &  & +CA  & 31419($\pm$ 0) & Grokked \\
 \cmidrule{2-5}
 &  \multirow{2}{*}{3000} 
 & baseline & Non-grokked & 35.30($\pm$ 0) \\
 &  & +CA  & Non-grokked & 54.23($\pm$ 0) \\

\midrule
\multirow{6}{*}{$x_1^{2}+x_2^{2}$ $+x_1+x_2$} 
&  \multirow{2}{*}{6000} 
 & baseline & 584($\pm$ 199) & Grokked \\
 &  & +CA  & 714($\pm$ 111) & Grokked \\
 \cmidrule{2-5}
&  \multirow{2}{*}{5000} 
 & baseline & 3374($\pm$ 3110) & Grokked \\
 &  & +CA  & 1014($\pm$ 332) & Grokked \\
 \cmidrule{2-5}
 &  \multirow{2}{*}{4000} 
 & baseline & Non-grokked & 98.61($\pm$ 0.03) \\
 &  & +CA  & 63549($\pm$ 41903) & Grokked \\

\midrule
\multirow{6}{*}{$x_1^{3}+x_2^{3}$ $+x_1+x_2$} 
 &  \multirow{2}{*}{5000} 
 & baseline & 6304($\pm$ 6622) & Grokked \\
 &  & +CA  & 4144($\pm$ 6274) & Grokked \\
 \cmidrule{2-5}
&  \multirow{2}{*}{4000} 
 & baseline & 12929($\pm$ 13511) & Grokked \\
 &  & +CA  & 7129($\pm$ 7611) & Grokked \\
  \cmidrule{2-5}
&  \multirow{2}{*}{3000} 
 & baseline & 48769($\pm$ 34298) & Grokked \\
 &  & +CA  & 33879($\pm$ 28652) & Grokked \\

\midrule
\multirow{6}{*}{$x_1^{2}+x_2^{2}$ $+x_1x_2$} 
&  \multirow{2}{*}{9000} 
 & baseline & 1859($\pm$ 84) & Grokked \\
 &  & +CA  & 2989($\pm$ 296) & Grokked \\

 \cmidrule{2-5}
&  \multirow{2}{*}{8000} 
 & baseline & Not-grokked & 97.43($\pm$ 0.20)\\
 &  & +CA  & Not-grokked & 97.27($\pm$ 0.20) \\

 \cmidrule{2-5}
 &  \multirow{2}{*}{7000} 
 & baseline & Not-grokked & 93.23($\pm$ 0.05) \\
 &  & +CA  & Not-grokked & 92.74($\pm$ 0.55) \\

 \toprule
\end{tabular}
}
\label{table:CA}
\end{table}

\subsection{Learning arithmetic operations via weight transfer}\label{sec:exp:wegiht_transfer}
In this section, we argue that transfer learning is indeed efficient for the acceleration of grokking as supported by the KA representation theorem. Depending on tasks and the group structures, either the decoder block or embedding layer is frozen.

\subsubsection{Decoder block transfer}
\noindent\textbf{Commutative operation to commutative operation:} As noted, the role of the decoder block is interpreted as the summation of embedding function with $\lambda_1=\lambda_2=1$ when learned from a commutative binary operation. To support our claim, we transfer the weights of decoder block corresponding to $x_1+x_2$ to accelerate the learning of $x_1\times x_2$, $(x_1+x_2)^2$, $(x_1+x_2)^3$, $x_1^2+x_2^2+x_1+x_2$, and $x_1^3+x_2^3+x_1+x_2$. Additionally, we verify that such a weight transfer is effective for learning $x_1^2+x_2^2+x_1+x_2$ and $x_1+x_2$ from $x_1^2+x_2^2$ and $x_1\times x_2$. 

\noindent\textbf{Anti-abelian operation:} By the same argument, we first learn $x_1-x_2$ and $x_1/x_2$, and transfer the decoder block to learn  $x_1/x_2$ and $x_1-x_2$ respectively. Experimental results supporting our claim are presented in Table~\ref{table:DT}.

\begin{figure}[H]
\centering
   \includegraphics[width=.8\textwidth]{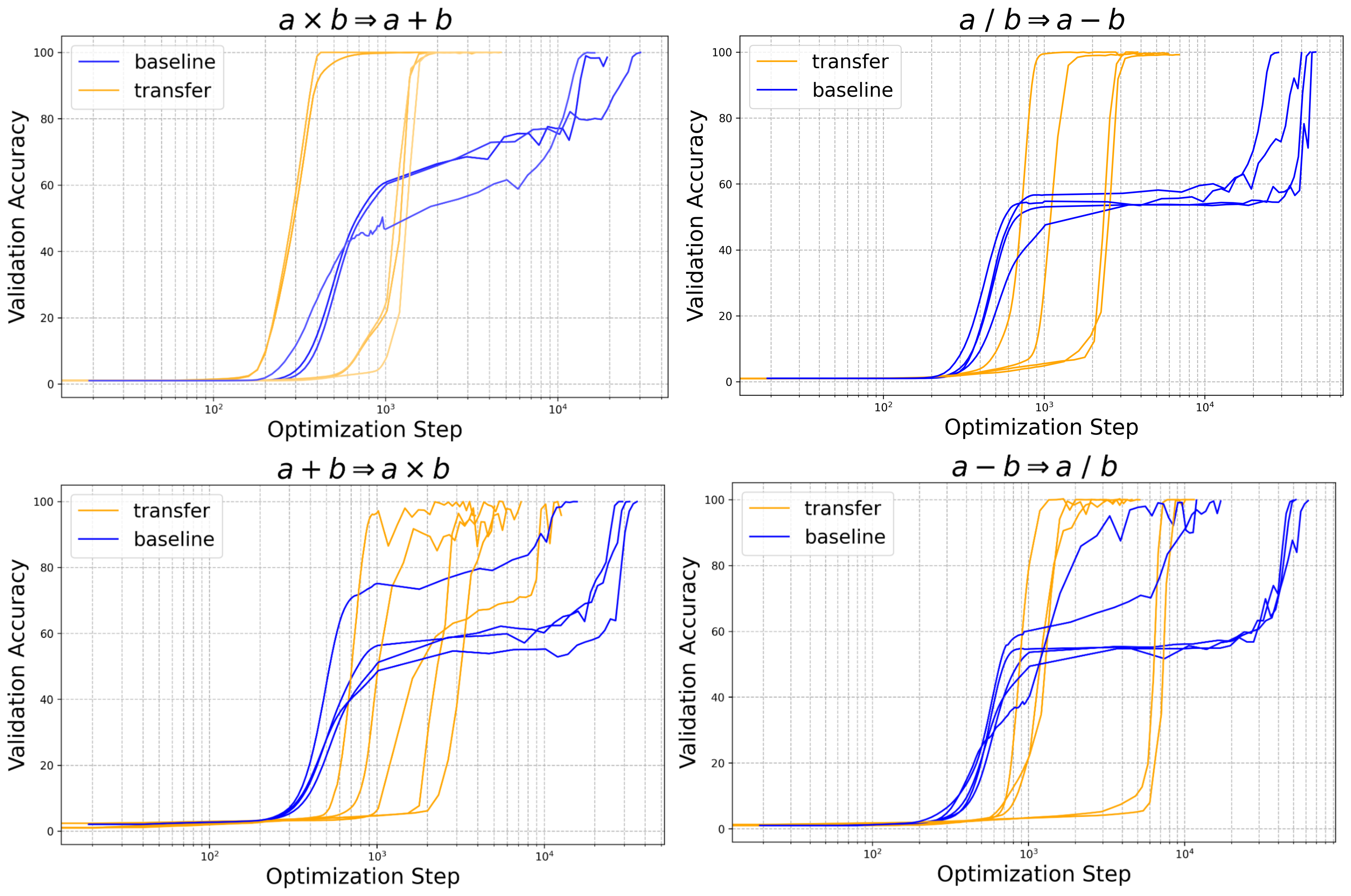} 
    \caption{The plots show the effectiveness of decoder block transfer in accelerating grokking compared to the baseline.}
    \label{fig:grok:decoder}
\end{figure}

\begin{table}[h]
\caption{The number of steps required to achieve grokking via decoder block transfer (DT) is tested using the full dataset, excluding the training data. Here, $N$ denotes the number of training samples.}
\vspace{0.3cm}
\centering
\scalebox{0.6}{\begin{tabular}{@{}ccccc@{}}
\toprule
Operation & $N$ & Method & Grokking step & Final accuracy \\
\midrule
\multirow{2}{*}{$x_1 \times x_2$} &  \multirow{2}{*}{5000} & baseline     & 20367($\pm$ 13414) & \multirow{2}{*}{Grokked} \\
 &  &  DT from $x+y$ & 3939($\pm$ 3421) &  \\
 \cmidrule{1-5}
\multirow{2}{*}{$x_1 / x_2$} &  \multirow{2}{*}{5000} & baseline     & 35543($\pm$ 22517) & \multirow{2}{*}{Grokked} \\
 &  &  DT from $x-y$ & 4115($\pm$ 3452)  &  \\
 \cmidrule{1-5}
\multirow{2}{*}{$x_1 + x_2$} &  \multirow{2}{*}{5000} & baseline     & 11647($\pm$ 10705) & \multirow{2}{*}{Grokked} \\
 &  &  DT from $x_1 \times x_2$  & 1087($\pm$ 642) & \\
 \cmidrule{1-5}
\multirow{2}{*}{$x_1 - x_2$} &  \multirow{2}{*}{5000} & baseline     & 38879($\pm$ 8081) & \multirow{2}{*}{Grokked} \\
 &  &  DT from $x_1/x_2$ & 2115($\pm$ 1288) &  \\
 \cmidrule{1-5}
 \multirow{2}{*}{$(x_1+x_2)^2$} &  \multirow{2}{*}{5000} & baseline     & 28459($\pm$ 9470) & \multirow{2}{*}{Grokked} \\
 &  &  DT from $x_1 + x_2$ & 2635($\pm$ 4666) &  \\
 \cmidrule{1-5}
  \multirow{2}{*}{$(x_1+x_2)^3$} &  \multirow{2}{*}{5000} & baseline     & Non-grokked & 77.01($\pm$1.74) \\
 &  &  DT from $x_1 + x_2$ & 479($\pm$ 107) & Grokked   \\
 \cmidrule{1-5}
   \multirow{2}{*}{$x_1^2 + x_2^2 + x_1 + x_2$} &  \multirow{2}{*}{5000} & baseline     & 755($\pm$ 293) & \multirow{2}{*}{Grokked}  \\
 &  &  DT from $x_1^2 + x_2^2$ & 359($\pm$ 50) &  \\
\toprule
\end{tabular}
}
\label{table:DT}
\end{table}

\subsubsection{Embedding transfer}\label{sec:exp:embedding}
\noindent\textbf{Composition of arithmetic operations:} After learning $x_1+x_2$ the weight of the embedding layer is transferred to accelerate the learning of $x_1+x_2+x_3$, $x_1+x_2+x_3+x_4$. Similarly, weights achieved with $x_1 \times x_2$ are transferred to learn $x_1 \times x_2 \times x_3$ and $x_1 \times x_2 \times x_3 \times x_4$ as they share the embedding function as discussed earlier. 
The reduced number of grokking steps are found in Table~\ref{table:nary}, which supports that embedding layer transfer accelerate the grokking. 

\begin{table}[H]
\caption{Number of steps required to achieve grokking with 100,000 random samples for learning the composition of arithmetic operations via embedding transfer (ET).
}
\vspace{0.3cm}
\centering
\scalebox{0.6}{\begin{tabular}{@{}ccccc@{}}
\toprule
Operation & $N$ & Method & Grokking step & Final accuracy \\
\midrule
 \multirow{6}{*}{$x_1 + x_2 + x_3$} 
  &  \multirow{2}{*}{10000} & baseline & Non-grokked & 32.17($\pm$ 11.01) \\
  &                         &  ET       &   29491($\pm$ 22414)         & Grokked \\ 
  \cmidrule{2-5}
  &  \multirow{2}{*}{100000} & baseline & 22186($\pm$ 25305) & 74.98($\pm$ 22.92)\\
    &                         &  ET       &    3039($\pm$ 5478)        & Grokked\\ 
    \cmidrule{2-5}
  &  \multirow{2}{*}{300000} & baseline & 42018($\pm$ 13873) & 80.70($\pm$ 19.20)\\
  &                         &  ET       &   10303($\pm$ 10661)         &  Grokked \\
\midrule
 \multirow{6}{*}{$x_1 \times x_2 \times x_3$} 
  &  \multirow{2}{*}{10000} & baseline & Non-grokked & 25.74($\pm$ 6.81)\\
  &                         &  ET       &     11867($\pm$ 7080)     & Grokked \\ 
  \cmidrule{2-5}
  &  \multirow{2}{*}{100000} & baseline & Non-grokked & 54.91($\pm$ 10.94)\\
    &                         &  ET       &   1087($\pm$ 380)         & Grokked \\
    \cmidrule{2-5}
  &  \multirow{2}{*}{300000} & baseline & 36852($\pm$ 46539) & 78.78($\pm$ 25.56)\\
  &                         &  ET       &   2591($\pm$ 407)         &     Grokked \\
  \midrule
 \multirow{2}{*}{$x_1 + x_2 + x_3 + x_4 $} 
  &  \multirow{2}{*}{100000} & baseline & Non-grokked & 63.62\\
  &                         &  ET       &     3559     & Grokked \\ 
  \midrule
 \multirow{2}{*}{$x_1 \times x_2 \times x_3 \times x_4 $} 
  &  \multirow{2}{*}{100000} & baseline & Non-grokked & 67.11\\
  &                         &  ET       &     1737     & Grokked \\ 
\toprule
\end{tabular} }
\label{table:nary}
\end{table}

\noindent\textbf{A system of equations with unknowns:} As shown in Figure~\ref{fig:grok:embedding} and Table~\ref{table:embedding}, embedding transfer leads to a significant reduction in grokking steps for all cases considered. 
\begin{table}[H]
\caption{Number of steps required to achieve grokking with 100,000 random samples for learning a system of equations via embedding transfer (ET).}
\vspace{0.3cm}
\centering
\scalebox{0.6}{
\begin{tabular}{@{}ccccc@{}}
\toprule
Operation & $N$ & Method & Grokking step & Final accuracy\\
\midrule
\multirow{4}{*}{$x_1+x_2$} 
 &  \multirow{2}{*}{100000} & baseline     & Non-grokked & 91.86($\pm$ 3.62) \\
 &  & ET from $x_1+x_2$  & 41615($\pm$ 29826) & Grokked \\
 \cmidrule{2-5}
&  \multirow{2}{*}{50000} & baseline     & Non-grokked & 85.60($\pm$ 2.47) \\
 &  & ET from $x_1+x_2$ & 3075($\pm$ 1853) & Grokked \\
\midrule
 \multirow{4}{*}{$x_1\times x_2$ } &  \multirow{2}{*}{100000} & baseline     & 47209($\pm$ 36305) 
  & 97.14($\pm$3.34) \\
 &  & ET from $x_1\times x_2$  & 5183($\pm$ 710) & Grokked  \\
  \cmidrule{2-5}
   &  \multirow{2}{*}{50000} & baseline     & 49965($\pm$ 29621) & 97.90($\pm$ 2.67) \\
    &  &  ET from $x_1 \times x_2$ & 6535($\pm$ 3708) & Grokked  \\
\toprule
\end{tabular} }
\label{table:embedding}
\end{table}

\begin{figure}[H]
\centering
   \includegraphics[width=0.7\textwidth]{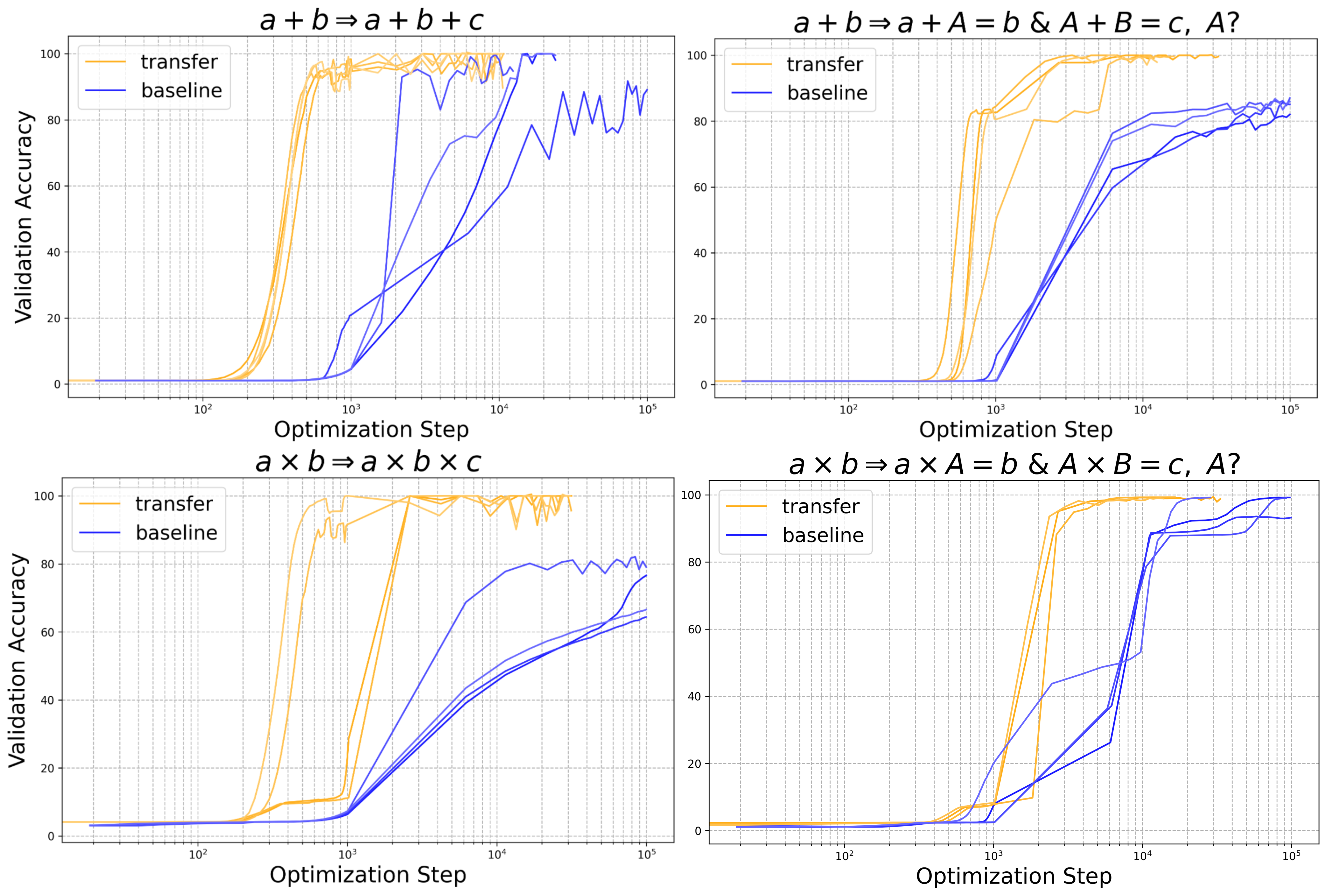} 
    \caption{The plots show that embedding transfer successfully accelerates grokking for all tasks considered: the composition of operations (left) and solving for unknowns in a system of equations (right).}
    \label{fig:grok:embedding}
\end{figure}

\subsection{Training with a limited number of tokens}
We empirically verify that the transformer can capture algebraic properties when learning the composition of arithmetic operations. To demonstrate this, the embedding layer is transferred from a single binary operation. Although the model uses only a limited number of tokens representing $0$ to $79$ (80 in total), an acceleration of grokking is observed, as shown in Table~\ref{table:limited_token}. 

\begin{table}[h]
\caption{Number of steps required for grokking when a fewer number of tokens $(80<97)$ are used with $N=10,000$ training samples and trained via embedding transfer (ET).}
\vspace{0.3cm}
\centering
\scalebox{0.6}{\begin{tabular}{@{}cccccc@{}}
\toprule
Operation & $N$ & Method &  $\#$ of tokens &Grokking step & Final accuracy \\
\midrule
 \multirow{2}{*}{$x_1 + x_2 + x_3$} 
  &  \multirow{2}{*}{10000} & baseline & 80 & Non-grokked & 26.42($\pm$ 0.10) \\
  &              &  ET      & 80 &       61034($\pm$ 25437)    & Grokked \\ 
\midrule
 \multirow{2}{*}{$x_1 \times x_2 \times x_3$} 
  &  \multirow{2}{*}{10000} & baseline & 80& Non-grokked & 28.68($\pm$ 0.11) \\
  &                         &  ET      & 80 &      10679($\pm$ 2837)     & Grokked \\
\toprule
\end{tabular} }
\label{table:limited_token}
\end{table}

\section{Conclusion}
We explore the various approaches to accelerate the occurrence of grokking phenomena that are commonly observed in learning binary operations with transformers. Our methods feature the implementation of algebraic properties of binary operations and group structures. For general commutative binary operation, it is demonstrated that simple data augmentation contributes to the improvement of training efficiency. Exploiting the Kolmogorov-Arnold representation theorem, we justify that several different types of weight transfers accelerate the grokking. Leveraging the idea of weight transfer, we observe the acceleration of grokking in some extended tasks including the composition of arithmetic operations and a system with unknowns.

\section*{Appendix}\label{sec:app}
Some missing proofs and lemmas are provided. 
\subsection{Proof of Theorem~\ref{thm:abel}}
\label{proof_thm_abel}
Since each $C_{q_j}$ is abelian, by Lemma \ref{lem:one_dimensional}, we have an irreducible representation for $C_{q_j}$ $\rho_j: C_{q_j} \to GL_1(\mathbb{C}) \cong \mathbb{C}^\ast$. By Lemma \ref{lem:faithful}, $\rho_j$ is injective, hence $\rho_j^{-1}$ is well-defined. Define $T:\mathbb{C}\to\mathbb{C}$ as $T(a+bi)=a+\tilde{b}$ where $a,b \in \mathbb{R}, \; b = \tilde{b} \Mod{2 \pi} $.  Let $\phi_j(x):=\log(\rho_j(x))$ and $\psi_j(z):= \rho_j^{-1}(\exp(T(z)))$. We define $\rho_{<G,\circ>}:G \to (\mathbb{C}^*)^m$, $\phi_{<G,\circ>}:G \to (\mathbb{C}^*)^m$ and $\psi_{<G,\circ>}: (\mathbb{C}^*)^m \to G$ as 
\begin{align*}
    &\rho_{<G,\circ>}(x):=[\rho_1(x),\dots,\rho_m(x)], \\
    &\phi_{<G,\circ>}(x):=[\phi_1(x),\dots,\phi_m(x)] = \log(\rho_{<G,\circ>}(x)), \\
    &\psi_{<G,\circ>} := \psi_1 \times \dots \times  \psi_m \in C_{q_1} \times \dots C_{q_m} \cong G.
\end{align*}
where $\log$ applies component-wise. 
Then we have the following multiplicative representation:
\begin{align*} 
    &\psi_{<G,\circ>}( \sum_{i=1}^n \phi_{<G,\circ>}(x_i)) = \psi_{<G,\circ>}( \sum_{i=1}^n \bigoplus_{j=1}^m \log(\rho_j(x_i))) \\
    &= \psi_{<G,\circ>}\big( \bigoplus_{j=1}^m \log(    \rho_j(x_1)\rho_j(x_2) \dots \rho_j(x_n) ) \big) \\
    &= \psi_{<G,\circ>}\big( \bigoplus_{j=1}^m \log( \rho_j(x_1 \circ x_2 \circ \dots \circ x_n) ) \big) \\
    &= x_1 \circ \dots \circ x_n.
\end{align*}
where $\bigoplus$ denotes the concatenation and the second equality follows from Lemma \ref{lem:complex_log_sum}.
Since $\mathbb{C}^* \subset \mathbb{C} \cong \mathbb{R}^2$, we can regard as $\phi_{<G,\circ>}:G\to\mathbb{R}^{2m},\psi:\mathbb{R}^{2m}\to G$.

\subsection{Representation theory}

We provide necessary lemmas as well as details of the proofs.

\begin{lem}[Schur's Lemma \cite{serre1977linear}]
    Let $G$ be a group and $k$ be an algebraically closed field. Let $V,W$ be vector spaces over $k$ and $\rho_V:G \to GL(V),\rho_W:G\to GL(W)$ be irreducible representations of $G$ over the field $k$. Let $f:V\to W$ be a homomorphism from $\rho_V$ to $\rho_W$. Suppose $\rho_V=\rho_W$, then $f$ is a scalar multiplication map.
\end{lem}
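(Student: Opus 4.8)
The statement to prove is Schur's Lemma: for irreducible representations $\rho_V, \rho_W$ of a group $G$ over an algebraically closed field $k$, any homomorphism $f\colon V\to W$ with $\rho_V=\rho_W$ is a scalar multiplication map.

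\medskip

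The plan is to prove the standard two-part version and then specialize. First I would establish the kernel/image dichotomy: since $f$ intertwines the two representations, $\ker f\subseteq V$ is a subrepresentation and $\operatorname{im} f\subseteq W$ is a subrepresentation. By irreducibility of $\rho_V$, either $\ker f=V$ (so $f=0$) or $\ker f=\{0\}$; by irreducibility of $\rho_W$, either $\operatorname{im} f=\{0\}$ (so $f=0$) or $\operatorname{im} f=W$. Hence $f$ is either zero or an isomorphism. Next I would use algebraic closedness of $k$: assuming $V=W$ and $\rho_V=\rho_W$, pick an eigenvalue $\lambda\in k$ of $f$ (which exists because the characteristic polynomial of $f$ splits over $k$ — this is the one place finite-dimensionality is implicitly needed, as is standard). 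Then $f-\lambda\,\mathrm{id}_V$ is again a homomorphism of representations from $\rho_V$ to itself, so by the dichotomy it is either zero or invertible; but it has nontrivial kernel (the $\lambda$-eigenspace), so it cannot be invertible, forcing $f-\lambda\,\mathrm{id}_V=0$, i.e. $f=\lambda\,\mathrm{id}_V$.

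\medskip

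The key steps, in order: (i) verify $\ker f$ and $\operatorname{im} f$ are $G$-invariant subspaces, using that $f\circ\rho_V(g)=\rho_W(g)\circ f$ for all $g\in G$; (ii) invoke irreducibility to get the zero-or-isomorphism dichotomy; (iii) take an eigenvalue $\lambda$ of $f$ over the algebraically closed field $k$; (iv) apply the dichotomy to $f-\lambda\,\mathrm{id}$ to conclude it vanishes. I would present (i)--(ii) as a short lemma ("either $f=0$ or $f$ is an isomorphism") and then the scalar conclusion as the main statement, since that matches how the result is used (via Lemma~\ref{lem:one_dimensional} on one-dimensional irreducibles).

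\medskip

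The main obstacle is essentially bookkeeping rather than depth: one must be careful that the hypothesis "$f$ is a homomorphism from $\rho_V$ to $\rho_W$" is unpacked correctly as the intertwining relation, and that the existence of an eigenvalue genuinely uses both algebraic closedness and (tacitly) finite-dimensionality of $V$ — the representations arising in this paper are finite-dimensional (indeed one-dimensional in the application), so this is harmless, but it should be stated. No serious difficulty is expected; this is a classical argument and the proof will be short.
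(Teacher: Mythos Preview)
Your proposal is correct and is exactly the classical argument found in Serre's text, which is where the paper sources this lemma. Note, however, that the paper itself does not supply a proof of this statement at all --- it simply states Schur's Lemma with a citation to \cite{serre1977linear} and uses it as a black box to derive Lemma~\ref{lem:one_dimensional}; so there is no in-paper proof to compare against, and your write-up would in fact be \emph{more} detailed than what the paper contains.
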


\begin{lem}
\label{lem:one_dimensional}
    Let $G$ be an abelian group and $V$ be a vector space over  an algebraically closed field $k$.
    Suppose that $\rho: G\to GL(V)$ be an irreducible representation. Then $\dim_k(V)=1$.
\end{lem}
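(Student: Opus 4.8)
The plan is to run the classical argument deriving one-dimensionality of an irreducible representation of an abelian group from Schur's Lemma, with the algebraic closedness of $k$ entering in an essential way. By the standing convention that an irreducible representation is nonzero we have $V \neq \{0\}$, so it suffices to show that the only linear subspaces of $V$ are $\{0\}$ and $V$: a vector space with no nonzero proper subspace has dimension at most $1$, and combined with $V \neq \{0\}$ this gives $\dim_k V = 1$.

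First I would fix an arbitrary $g \in G$ and observe that $\rho(g) : V \to V$ is a homomorphism of representations from $\rho$ to itself. Indeed, for every $h \in G$,
\begin{align*}
\rho(g) \circ \rho(h) = \rho(gh) = \rho(hg) = \rho(h) \circ \rho(g),
\end{align*}
where the middle equality is precisely the commutativity of $G$; this is the only place that hypothesis is used. Hence $\rho(g)$ intertwines $\rho$ with $\rho$.

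Next I would apply Schur's Lemma (in the form stated above, valid since $k$ is algebraically closed) to the intertwiner $\rho(g)$: it must be a scalar multiplication, $\rho(g) = \lambda_g \, \mathrm{id}_V$ for some $\lambda_g \in k$, necessarily nonzero because $\rho(g) \in GL(V)$ is invertible. Consequently every linear subspace $W \subseteq V$ satisfies $\rho(g) W = \lambda_g W = W$ for all $g \in G$, i.e. $W$ is $G$-invariant. Irreducibility of $\rho$ forces the only $G$-invariant subspaces to be $\{0\}$ and $V$, hence these are the only subspaces of $V$ at all, and the reduction from the first paragraph finishes the proof.

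I do not expect any genuine obstacle; the only point worth flagging is the internal logic of Schur's Lemma in its scalar form, which relies on $\rho(g)$ having an eigenvalue in $k$ — this is where algebraic closedness is needed, and it implicitly uses finite-dimensionality of $V$ (which holds throughout our application, where $G$ is a finite cyclic group $C_{q_j}$). If one wished to be fully self-contained one would spell out this eigenvalue step: pick a root $\lambda_g$ of the characteristic polynomial of $\rho(g)$, note that $\rho(g) - \lambda_g \, \mathrm{id}_V$ is an intertwiner with nonzero kernel, and invoke the kernel/image form of Schur's Lemma to conclude that it vanishes.
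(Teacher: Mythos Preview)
Your argument is correct and is the standard derivation via Schur's Lemma; the paper does not actually write out a proof of this lemma, but it places Schur's Lemma immediately before it precisely to signal this route, so your approach matches the paper's intended one. Your caveat about finite-dimensionality being implicitly used in the scalar form of Schur's Lemma is also on point and consistent with the paper's setting (finite cyclic groups $C_{q_j}$).
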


\begin{lem}[\cite{gaschutz1954endliche}]
\label{lem:faithful}
    Let $G$ be a finite abelian group. G has a faithful (injective) irreducible representation if and only if it is cyclic.
\end{lem}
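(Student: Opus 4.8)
\textbf{Proof plan for Lemma~\ref{lem:faithful}.}

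The plan is to prove both directions of the equivalence using elementary facts from the representation theory of finite abelian groups, relying on Lemma~\ref{lem:one_dimensional}, which tells us that every irreducible representation of a finite abelian group $G$ over an algebraically closed field is one-dimensional, i.e., a group homomorphism $\chi : G \to k^\ast$ (a character). Under this reduction, a faithful irreducible representation is exactly an injective character.

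For the easy direction, suppose $G$ is cyclic of order $n$ with generator $g$. I would fix a primitive $n$-th root of unity $\zeta \in k$ (which exists because $k$ is algebraically closed and, as we only need the characteristic-zero or prime-to-$n$ case here with $k \supseteq \mathbb{C}$, is harmless), and define $\chi(g^t) = \zeta^t$. This is a well-defined homomorphism $G \to k^\ast$, it is one-dimensional hence irreducible by Lemma~\ref{lem:one_dimensional} (or trivially irreducible as a $1$-dimensional representation), and it is injective since $\chi(g^t) = 1$ forces $n \mid t$. Hence $G$ has a faithful irreducible representation.

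For the converse, suppose $G$ admits a faithful irreducible representation; by Lemma~\ref{lem:one_dimensional} this is an injective character $\chi : G \hookrightarrow k^\ast$. The key point is that any finite subgroup of the multiplicative group $k^\ast$ of a field is cyclic — this is the standard fact proved via the structure theorem for finite abelian groups together with the observation that $x^d = 1$ has at most $d$ roots in a field, which bounds the exponent and forces the group to be cyclic. Since $\chi$ is injective, $G$ is isomorphic to the finite subgroup $\chi(G) \le k^\ast$, which is cyclic; therefore $G$ is cyclic.

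The main obstacle, such as it is, is invoking the right ambient lemma cleanly: the converse hinges entirely on ``every finite subgroup of $k^\ast$ is cyclic,'' and the only subtlety is making sure this is stated at the correct level of generality (it holds for any field, and in particular for $k = \mathbb{C}$, which is all we need in this paper). Everything else is routine: the forward direction just exhibits the character explicitly, and the translation between ``faithful irreducible representation'' and ``injective character'' is immediate from Lemma~\ref{lem:one_dimensional}. One could alternatively cite \cite{gaschutz1954endliche} directly, but the self-contained argument above is short enough to include.
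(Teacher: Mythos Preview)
Your argument is correct in both directions: the forward direction exhibits an explicit injective character on a cyclic group via a primitive root of unity, and the converse correctly reduces, via Lemma~\ref{lem:one_dimensional}, to the standard fact that every finite subgroup of $k^\ast$ is cyclic. The only caveat you already flag yourself --- existence of a primitive $n$-th root of unity --- is a non-issue here since the paper works over $\mathbb{C}$.

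As for comparison with the paper: the paper does not give a proof of this lemma at all. It is stated in the appendix with a bare citation to \cite{gaschutz1954endliche} and used as a black box inside the proof of Theorem~\ref{thm:abel}. So your write-up is strictly more than what the paper provides; you anticipated this possibility yourself in your final sentence. If anything, your self-contained argument is the more informative choice, since the cited reference treats a much more general (and non-abelian) setting, whereas the abelian case needed here is, as you show, a two-line consequence of Lemma~\ref{lem:one_dimensional} and the cyclicity of finite subgroups of $k^\ast$.
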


\begin{lem}\label{lem:complex_log_sum}
    Let $z_1, z_2 \in \mathbb{C}^\ast$. Define $T:\mathbb{C}\to\mathbb{C}$ as $T(a+bi)=a+\tilde{b}$ where $a,b \in \mathbb{R}, \; b = \tilde{b} \Mod{2 \pi} $. Then 
    \begin{align*}
        T(\log(z_1) + \log(z_2)) = T(\log(z_1z_2)).
    \end{align*}
\end{lem}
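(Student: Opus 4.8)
The plan is to pass to polar coordinates and track the real and imaginary parts of the complex logarithm separately, exploiting the fact that $T$ leaves the real part untouched and collapses the imaginary part modulo $2\pi$. Fix $z_1,z_2\in\mathbb{C}^\ast$ and write $z_j=r_j e^{i\theta_j}$ with $r_j>0$ and $\theta_j$ in the fundamental domain used to define the branch of $\log$, so that $\log z_j=\ln r_j + i\theta_j$. The point to keep in mind is that $\mathrm{Re}\,\log z=\ln|z|$ is genuinely single-valued and additive, whereas $\mathrm{Im}\,\log z$ is an argument and hence only defined modulo $2\pi$; the operator $T$ is tailored so that $T(a+bi)=a+\tilde b$ depends on $b$ only through its class in $\mathbb{R}/2\pi\mathbb{Z}$ (represented in $[0,2\pi)$ as in Example~\ref{ex:add}).

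First I would compute the left-hand side directly: $\log z_1+\log z_2=(\ln r_1+\ln r_2)+i(\theta_1+\theta_2)=\ln(r_1 r_2)+i(\theta_1+\theta_2)$, so by the definition of $T$ we get $T(\log z_1+\log z_2)=\ln(r_1 r_2)+\widetilde{\theta_1+\theta_2}$, where $\widetilde{\theta_1+\theta_2}$ denotes the reduction of $\theta_1+\theta_2$ modulo $2\pi$ into $[0,2\pi)$. Next I would compute the right-hand side: since $z_1 z_2=r_1 r_2\, e^{i(\theta_1+\theta_2)}$, we have $|z_1 z_2|=r_1 r_2$ and $\theta_1+\theta_2=\alpha+2\pi k$ where $\alpha$ is the argument of $z_1 z_2$ in the chosen fundamental domain and $k\in\mathbb{Z}$. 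Hence $\log(z_1 z_2)=\ln(r_1 r_2)+i\alpha$ and $T(\log(z_1 z_2))=\ln(r_1 r_2)+\tilde\alpha$. Because $\alpha$ and $\theta_1+\theta_2$ differ by an integer multiple of $2\pi$, their reductions into $[0,2\pi)$ coincide, so $\tilde\alpha=\widetilde{\theta_1+\theta_2}$; comparing the two expressions gives the claimed identity.

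There is no substantial obstacle here beyond careful bookkeeping: the only delicate point is the multivaluedness of the complex logarithm, and the argument should make explicit that $\log z_1+\log z_2$ and $\log(z_1 z_2)$ can differ only by an element of $2\pi i\,\mathbb{Z}$, which is precisely the ambiguity that $T$ quotients out. Consequently the statement is independent of the branch convention adopted for $\log$, and the same induction extends it to $T\!\left(\sum_{i=1}^{n}\log z_i\right)=T(\log(z_1\cdots z_n))$, which is the form invoked in the proof of Theorem~\ref{thm:abel}.
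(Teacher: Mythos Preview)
Your proposal is correct and follows essentially the same route as the paper: pass to polar form $z_j=r_je^{i\theta_j}$, separate the real part $\ln(r_1r_2)$ (which is additive and untouched by $T$) from the imaginary part, and observe that the two expressions differ only by an integer multiple of $2\pi$ in the argument, which $T$ kills. Your write-up is somewhat more explicit about the branch ambiguity and adds the inductive extension to $n$ factors, but the underlying computation is identical to the paper's.
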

\begin{proof}
    Let $z_1=r_1e^{i \theta_1}$, and $z_2=r_2e^{i \theta_2}$ for $r_1,r_2 \in (0,\infty), \; \theta_1,\theta_2 \in [0, 2 \pi )$. Then $\log(z_1)= \log(r_1)+i \theta_1$, $\log(z_2)=\log(r_2)+i \theta_2$. Let $\tilde{\theta} = \theta_1 + \theta_2 \Mod{2 \pi }$. We have 
    \begin{align*}
        &T(\log(z_1) + \log(z_2)) = T(\log(r_1)+\log(r_2) + i( \theta_1 +\theta_2)) \\ 
        &= \log(r_1r_2) + i \tilde{\theta}  
        = T ( \log(r_1r_2) + i (\theta_1+\theta_2) ) = T(\log(z_1z_2)).
    \end{align*}
\end{proof}


 \bibliographystyle{elsarticle-num} 
 \bibliography{ref}

\end{document}